\definecolor{customred}{RGB}{220,20,60}
\definecolor{LightCyan}{rgb}{0.88,1,1}
\definecolor{LightRed}{rgb}{1,0.5,0.5}
\definecolor{LightYellow}{rgb}{1,1,0.88}
\definecolor{Grey}{rgb}{0.75,0.75,0.75}
\definecolor{DarkGrey}{rgb}{0.55,0.55,0.55}
\definecolor{LightGreen}{rgb}{0.0, 0.70, 0.0}
\definecolor{gen}{RGB}{0,0,200}
\definecolor{cc}{RGB}{231,117,0}
\newcommand{\defn}{\coloneqq}
\newcommand{\train}{\mathsf{train}}
\title{Beyond Confidence: Adaptive and Coherent Decoding for Diffusion Language Models}
\author{
  Kecheng Chen$^{1,\bigstar}$, Ziru Liu$^{2,\bigstar,\spadesuit}$, Xijia Tao$^{3}$, 
  Hui Liu$^{1}$, Xinyu Fu$^{2}$, Suiyun Zhang$^{2}$, \\ \textbf{Dandan Tu}$^{2}$,  
  \textbf{Lingpeng Kong}$^{3}$, \textbf{Rui Liu}$^{2,\clubsuit}$, \textbf{Haoliang Li}$^{1,\clubsuit}$ \\
  \\
  $^{1}$City University of Hong Kong, 
  $^{2}$Huawei Research, 
  $^{3}$The University of Hong Kong 
  \\
  Email: \texttt{liu.rui2@huawei.com}; \texttt{ haoliang.li@cityu.edu.hk}\\
}
\begin{document}
\theoremstyle{plain} 
\newtheorem{lemma}{\bf Lemma} 
\newtheorem{proposition}{\bf Proposition}
\newtheorem{theorem}{\bf Theorem}
\newtheorem{corollary}{\bf Corollary} 
\newtheorem{claim}{\bf Claim}

\theoremstyle{remark}
\newtheorem{assumption}{\bf Assumption} 
\newtheorem{definition}{\bf Definition} 
\newtheorem{condition}{\bf Condition}
\newtheorem{property}{\bf Property} 
\newtheorem{example}{\bf Example}
\newtheorem{fact}{\bf Fact}
\newtheorem{remark}{\bf Remark}

\maketitle

\begin{abstract}
Diffusion Language Models (DLMs) have recently achieved significant success due to their any-order generation capabilities. However, existing inference methods typically rely on local, immediate-step metrics—such as confidence or entropy—which inherently lack a more reliable perspective. This limitation frequently leads to inconsistent sampling trajectories and sub-optimal generation quality. To address this, we propose \textbf{C}oherent \textbf{C}ontextual \textbf{D}ecoding (\textbf{CCD}), a novel inference framework built upon two core innovations. First, CCD employs a trajectory rectification mechanism that leverages historical context to enhance sequence coherence, enabling the early rejection of sub-optimal paths. We demonstrate that this mechanism is theoretically equivalent to modeling the consistency of historical steps via the conditional mutual information between context and token predictions. Building on this theoretical insight, we further address the inefficiency of conventional uniform decoding budgets. Instead of rigid allocations based on diffusion steps, we introduce an adaptive sampling strategy that dynamically adjusts the unmasking budget for each step according to our consistency metric. Consequently, our method significantly improves the quality of generation trajectories while accelerating the sampling process. Empirically, our method achieves a simultaneous enhancement in both inference speed and performance across diverse benchmarks on Dream and LLaDA, delivering up to $3.48\times$ speedup alongside 3.91\% performance improvement. %The implementation code is available online to ease reproducibility.
\end{abstract}

\footnotetext[1]{$^{\bigstar}$ Contribute equally to this work $^{\clubsuit}$Corresponding authors $^{\spadesuit}$ Project Lead}
% keywords can be removed
%\keywords{First keyword \and Second keyword \and More}

\section{Introduction}
\label{sec:Introduction1}

Recently, modern Large Language Models (LLMs) have demonstrated exceptional capabilities across diverse tasks, including coding, multi-turn dialogue, and mathematical reasoning \citep{kimiteam2025kimik2openagentic,openAI-o3,deepseekai2025deepseekv3technicalreport}. The prevailing paradigm for these models relies on autoregressive transformer architectures \citep{vaswani2023attentionneed}, which utilize a causal mask to perform next-token prediction via left-to-right generation. To transcend the constraints of this paradigm, the community is actively exploring next-generation architectures capable of mitigating the inherent inductive biases of autoregressive models \citep{austin2021structured,lou2023discrete,shi2024simplified,sahoo2024simple,gong2024scaling}. Notably, Diffusion Language Models (DLMs) have emerged as a compelling alternative \citep{gemini-diffusion, inceptionlabs2025mercury}. Unlike autoregressive models restricted to unidirectional contexts, DLMs leverage bidirectional contexts, incorporating information from all positions simultaneously. This global perspective allows DLMs to potentially achieve superior global awareness and long-term planning capabilities. With recent scaling to the 7--8 billion parameter regime (e.g., the Dream \citep{ye2025dream} and LLaDA \citep{nie2025large}), masked diffusion mechanisms have proven capable of matching the performance of their autoregressive counterparts while exhibiting significantly better long-term planning \citep{ye2025dream}. However, in contrast to the rapid advancements in training strategies, the inference (or sampling) procedure of DLMs remains relatively underexplored. Specifically, DLMs iteratively decode semantically meaningful tokens from input mask tokens based on the previously decoded context. Existing methods typically leverage the predictive distribution at the current step to select the top-$k$ or most certain tokens for unmasking \citep{ye2025dream}. To balance performance and latency, the decoding budget is usually kept uniform across all iterations \citep{ye2025dream, nie2025large}. While some approaches employ dynamic strategies based on confidence levels \citep{wei2025acceleratingdiffusionlargelanguage, li2025diffusion, hong2025wide} or speculative techniques \citep{li2025diffuspecunlockingdiffusionlanguage}, the standard paradigm remains largely heuristic.

Despite their empirical success, we argue that current single-step sampling procedures with uniform decoding budgets suffer from three critical limitations: (i) \textbf{Susceptibility to Local Optima:} Whether relying on entropy or maximum probability, single-step certainty-based schemes are prone to settling for local optima. This issue is exacerbated by the tendency of modern LLMs to exhibit over-confident even in error predictions~\citep{sun2025largelanguagemodelsoverconfident}. (ii) \textbf{Lack of Theoretical Grounding:} Current sampling procedures lack a direct theoretical bridge to the sampling error rate, making it difficult to govern the decoding process for controllable performance guarantees. (iii) \textbf{Suboptimal Budget Allocation:} Intuitively, a uniform sampling scheme is inefficient \citep{yang2025tamingmaskeddiffusionlanguage} on some sceneries. For instance, context-insensitive generations (e.g., inherent templates or formulaic phrases) require a smaller decoding budget, whereas complex reasoning steps require a larger one. While recent works have attempted to improve reliability—such as temporal self-consistency voting~\citep{wang2025time} or End-of-Sequence (EOS) token rejection with ascending budgets~\citep{yang2025tamingmaskeddiffusionlanguage}—these solutions often incur significant computational overheads due to complex post-processing or rely on handcrafted heuristics that lack a solid theoretical foundation.

To address these challenges collectively, we introduce \textbf{C}oherent \textbf{C}ontextual \textbf{D}ecoding (\textbf{CCD}), a novel inference framework that redefines DLM inference as a contextual consistency-aware, multi-step decoding process. CCD adopts a new perspective by modeling the consistency of historical steps using the conditional mutual information between the context and token predictions. This provides a theoretically grounded, fine-grained measure of consistency. Furthermore, we address the inefficiency of rigid decoding schedules by introducing an adaptive sampling strategy that dynamically adjusts the unmasking budget based on our consistency metric. Our key contributions are summarized as follows:
\begin{figure}[t]
  \centering
  \includegraphics[width=0.95\linewidth]{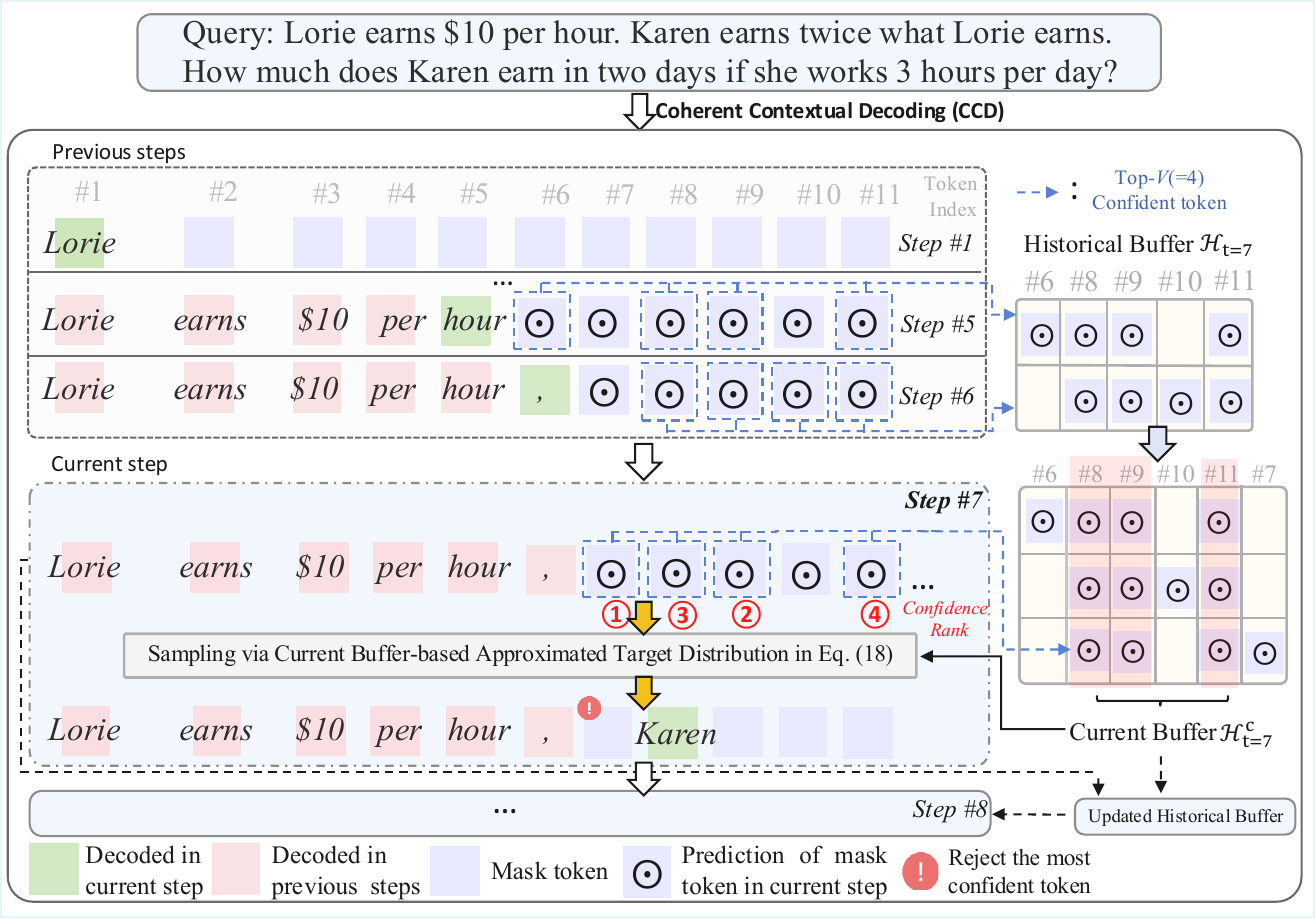}
  \caption{Framework of our proposed method.  We define a historical buffer $\mathcal{H}_t$ at iteration $t$ that stores the predictive distributions from the most recent $d$ iterations (except for the current iteration $t$) with only the top-$V$ most confident tokens at each iteration. At the current iteration $t$, we also identify mask token positions that appear both in the current top-$V$ set and in the historical buffer to obtain the current buffer $\mathcal{H}_{t}^{c}$, which ensures consistently confident tokens with maximized effective contexts to conduct an approximated target distribution-based sampling procedure in Eq. (\ref{pratical decoding}).  }

  \label{fig:diffuspec-framework}
  \vspace{-0.5cm}
\end{figure}

\begin{itemize}[leftmargin=*]
\item We propose Coherent Contextual Decoding, a training-free decoding framework that employs a \textit{trajectory rectification mechanism}. By leveraging historical context to enhance sequence coherence, CCD enables the early rejection of unreliable current-step optima.
\item We provide a rigorous theoretical analysis demonstrating that our mechanism is equivalent to modeling the consistency of historical steps via conditional mutual information. This formulation implicitly constructs a bridge to the theoretical upper bound of the sampling error rate.
\item We further introduce an adaptive sampling strategy that replaces rigid diffusion steps with a dynamic budget allocation guided by our consistency metric. This significantly improves the efficency of the sampling process.
\item Empirically, we demonstrate our method achieve simultaneous enhancement in both inference speed and performance across diverse benchmarks on Dream and LLaDA, delivering up to $3.48\times$ speedup alongside 3.91\% performance improvement.
\end{itemize}
\section{Preliminaries}
\label{sec:Preliminaries}

\textbf{Notation.} Let $\overline{\mathbb{X}} = \mathbb{X} \cup \{\operatorname{M}\}$ be the extended vocabulary of a finite-size vocabulary $\mathbb{X}$, where $\operatorname{M}$ denotes an absorbing mask token. Let $\mathbf{s} \in \mathbb{X}^{L}$ be a prompt sequence of length $L$. Let $\mathbf{x}_{t} = \{x_{t,1}, \dots, x_{t,N}\} \in \overline{\mathbb{X}}^{N}$ denote the response sequence at time step $t$, where $t \in \{0, \dots, T\}$ indicates the diffusion step. We adhere to the standard diffusion formulation where the process evolves from a fully masked state $\mathbf{x}_{T} = \{\operatorname{M}\}^{N}$ to a fully unmasked data state $\mathbf{x}_{0} \in \mathbb{X}^{N}$. We denote the concatenation of the prompt and response at step $t$ as $[\mathbf{s}; \mathbf{x}_{t}]$. Hereafter, $\hat{\mathbf{x}}_{t}$ and $\overline{\mathbf{x}}_{t}$ denote the predicted and true response sequences at iteration $t$, respectively.

\begin{figure}[h]
    \centering
    \begin{subfigure}{0.49\textwidth}
        \centering
        \includegraphics[width=\textwidth]{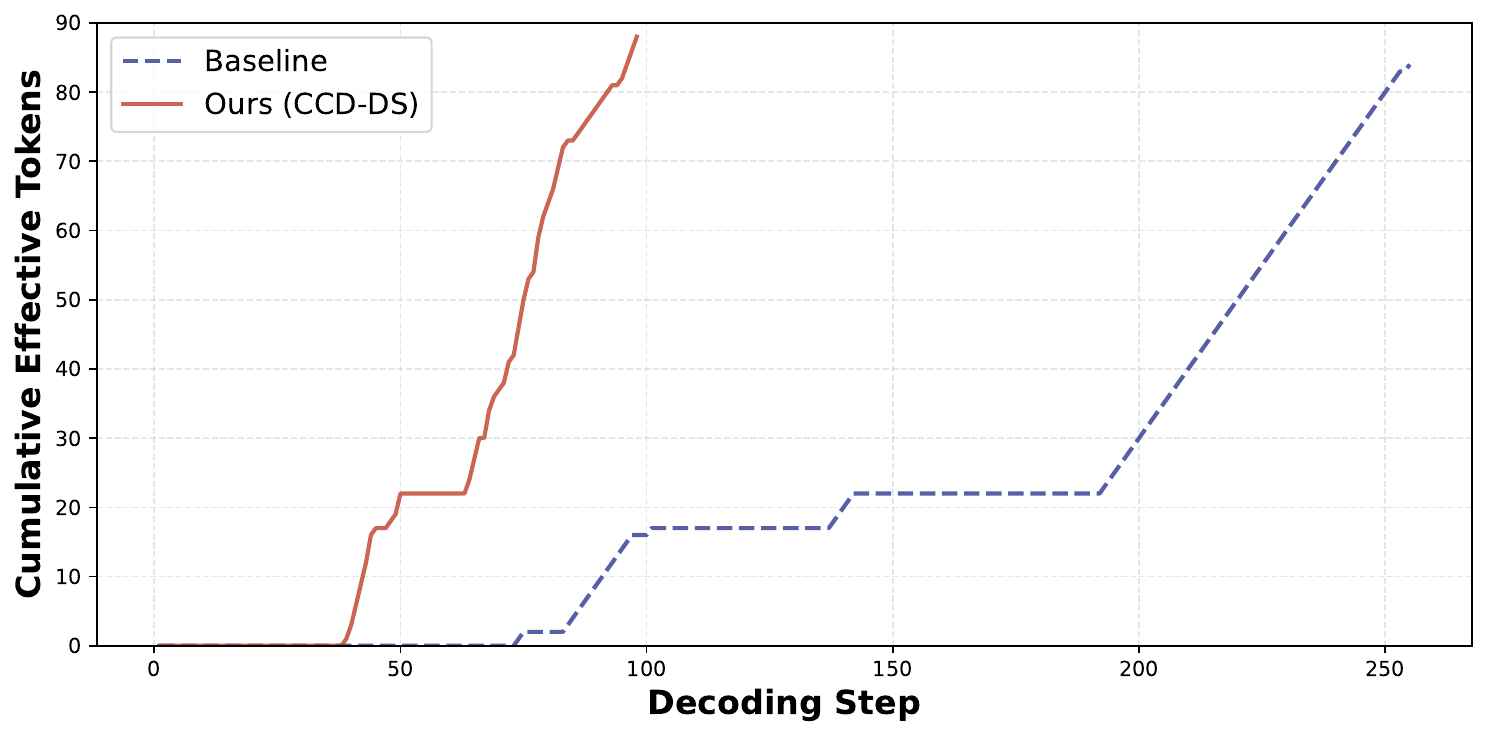}
        \caption{}
        \label{fig:sub1}
    \end{subfigure}
    \hfill
    \begin{subfigure}{0.49\textwidth}
        \centering
        \includegraphics[width=\textwidth]{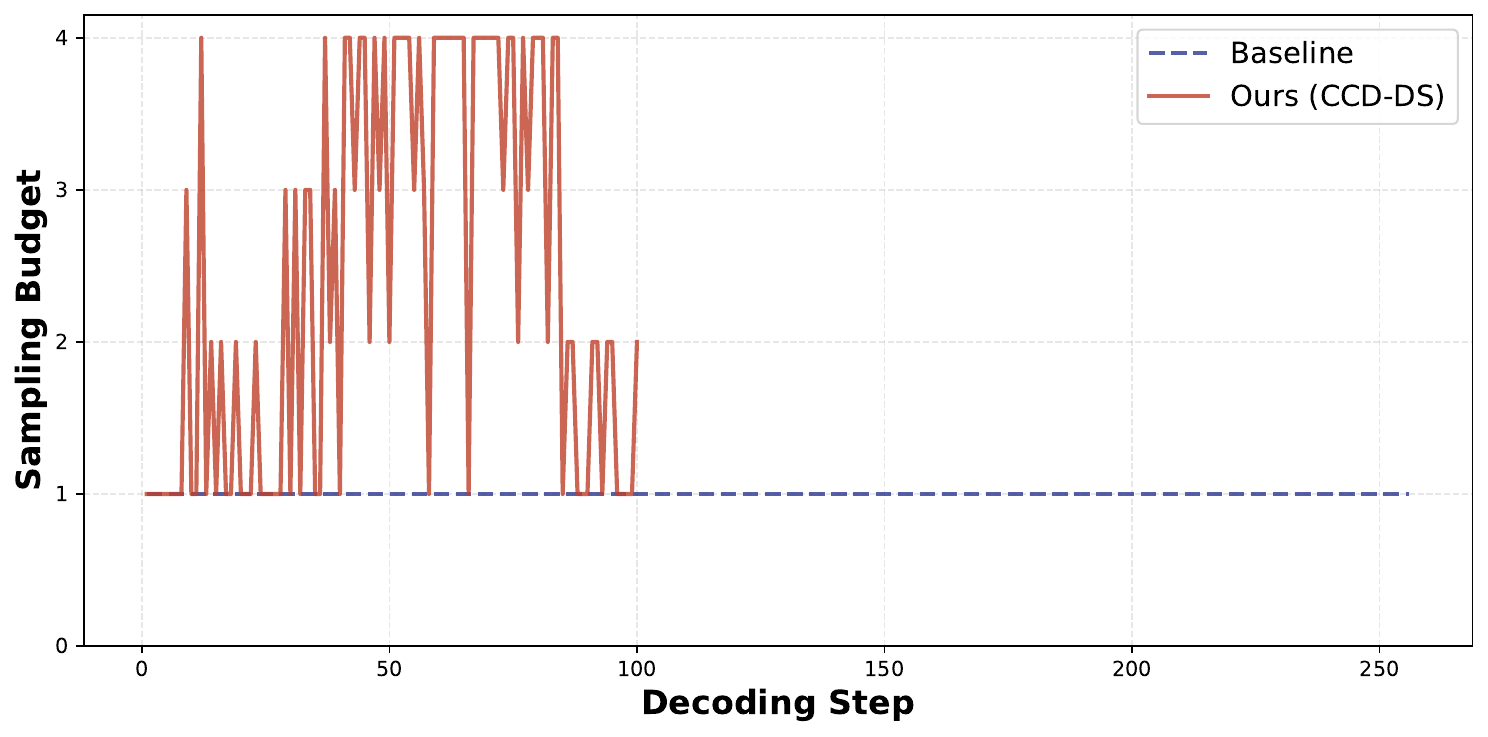}
        \caption{}
        \label{fig:sub2}
    \end{subfigure}
    \caption{This analysis investigates the Cumulative effective tokens (CET) and sampling budget using the Dream model on the Trip benchmark. The sequence length is set to 256. (a) We calculate the CET by excluding padding/EOS tokens from the total generated tokens. The decoding process includes periods where multiple EOS tokens are generated (visible as \textbf{plateaus}). (b) Sampling budget of each diffusion step of different sampling procedures. For our method, the decoding process is early stopped because all mask tokens are decoded.}
    \label{fig:adaptive sampling}
\end{figure}

\textbf{Sampling procedure.} A Diffusion Language Model (DLM) approximates the reverse process as a sequence-to-sequence mapping parameterized by $\theta$, which outputs a predictive distribution $\hat{p}_\theta$. The model progressively decodes mask tokens starting from $\mathbf{x}_{T}$. At each iteration, the model takes the fixed prompt $\mathbf{s}$ and the state from the previous iteration $\mathbf{x}_{t+1}$ to estimate the clean data $\mathbf{x}_0$.

Typically, the generation budget $b_{t}$ (the number of tokens to unmask at step $t$) is uniform across iterations (as illustrated in the dashed curve of Figure \hyperlink{fig:adaptive sampling}{2(b)}), such that $b_{t} \approx N/T$ for all $t \in [T]$. Under this setup, the most widely adopted sampling schedule\footnote{Random selection is feasible but typically yields degraded performance.} utilizes the predictive distribution to select the top-$b_{t}$ most certain tokens for decoding. We employ the negative entropy of the predictive distribution as a generic \textit{surrogate} for confidence\footnote{Negative entropy serves as a smooth approximation to confidence-based metrics (e.g., margin confidence), particularly given that LLM distributions tend to be highly overconfident~\citep{sun2025largelanguagemodelsoverconfident}.}. The sampling procedure\footnote{We formulate this using deterministically greedy sampling ($\arg \max$), but it can be extended to stochastic categorical sampling.} from $\mathbf{x}_{t+1}$ to $\mathbf{x}_{t}$ is formulated as:

\begin{equation}
    x_{t,i} = \begin{cases}
        \arg\max\limits_{k \in \mathbb{X}} \ p_{t,i}^{k} & \text{if } i \in \mathcal{J}_{t} \\
        x_{t+1,i} & \text{otherwise}
    \end{cases},
    \label{eq:sampling procedure equation 1}
\end{equation}

where the components are defined as:
\begin{itemize}
    \item $p_{t,i} = \hat{p}_{\theta}(\cdot \mid x_{t+1,i} = \operatorname{M}, \mathbf{c}_{t,i}, \mathbf{s})$ is the predictive distribution at position $i$, given the prompt $\mathbf{s}$ and context $\mathbf{c}_{t,i}$. Here, $\mathbf{c}_{t,i}$ denotes the set of unmasked tokens that serve as the context for step $t$ for position $i$.
    \item $\mathcal{K}_{t} = \{i \mid x_{t+1,i} = \operatorname{M}\}$ is the set of indices currently masked.
    \item $\mathcal{J}_{t} \subset \mathcal{K}_{t}$ is the set of indices selected for decoding, determined by maximizing certainty:
    \begin{equation*}
        \mathcal{J}_{t} = \underset{\mathcal{S} \subset \mathcal{K}_{t}, |\mathcal{S}|=b_t}{\arg\max} \sum_{i \in \mathcal{S}} -H(p_{t,i}).
    \end{equation*}
\end{itemize}
Here, $H(p_{t,i}) = -\sum_{k \in \mathbb{X}} (p_{t,i}^{k} \log p_{t,i}^{k})$ denotes the Shannon entropy.

\section{Methodology}
\label{sec:Methodology}

\subsection{Motivation}
As illustrated in Eq. (\ref{eq:sampling procedure equation 1}), existing sampling strategies predominantly rely on the predictive distribution of the DLM at a specific position $i$ given the current state, denoted as $\hat{p}_{\theta}(\cdot \mid x_{t+1,i}, \mathbf{c}_{t,i}, \mathbf{s})$. This distribution not only determines the entropy statistics for the sampling index set $\mathcal{J}_t$ but also dictates the specific token values via probability mass. Since the input token $x_{t+1,i}$ is a uniform mask token for all indices in $\mathcal{K}_t$ and the prompt $\mathbf{s}$ is fixed, the predictive distribution is primarily conditioned on the variable context $\mathbf{c}_{t,i}$. Consequently, we refer to $\hat{p}_{\theta}(\cdot \mid x_{t+1,i}, \mathbf{c}_{t,i}, \mathbf{s})$ as the \textit{single-step predictive distribution}.

To quantify the quality of the sampling procedure governed by this single-step distribution, we introduce the Token Error Rate (TER) conditioned on the prompt:

\begin{definition}[Token Error Rate for Conditional Generation]~\citep{feng2025theoreticalbenefitlimitationdiffusion}
Given a target data distribution $p(\cdot \mid \mathbf{s})$ and the model's predictive distribution $\hat{p}(\cdot \mid \mathbf{s})$, the TER is defined as the expected negative log-likelihood per token:
\begin{align}
\log_2 \mathrm{TER}(\hat{p}(\cdot \mid \mathbf{s}); p(\cdot \mid \mathbf{s})) \defn -\frac{1}{N}\mathbb{E}_{\mathbf{x} \sim p(\cdot \mid \mathbf{s})}\big[\log \hat{p}(\mathbf{x} \mid \mathbf{s})\big],
\end{align}
where $N$ is the sequence length.
\end{definition}

This metric captures the alignment between the generated and target distributions. To analyze this at the token level, we decompose the TER using the decoding-time single-step predictive distribution $\hat{p}_{\theta}(x_{i} \mid \mathbf{c}_{\cdot,i}, \mathbf{s})$—where position $i$ is decoded into $x_{i}$ using a specific decoding-time context $\mathbf{c}_{\cdot,i}$. Assuming the target distribution factorizes independently across positions given the prompt for analytical tractability, we derive:

\begin{align}
\log_2 \mathrm{TER}(\hat{p}(\cdot|\mathbf{s}); p(\cdot|\mathbf{s})) 
&= -\frac{1}{N}\mathbb{E}_{\mathbf{x} \sim p(\cdot|\mathbf{s})}\left[\sum_{i=1}^{N} \log \hat{p}_{\theta}(x_{i}|\mathbf{c}_{\cdot,i},\mathbf{s})\right]\
&= -\frac{1}{N}\sum_{i=1}^{N}\mathbb{E}_{x_i \sim p(x_i|\mathbf{s})}\left[\log \hat{p}_{\theta}(x_{i}|\mathbf{c}_{\cdot,i},\mathbf{s})\right]
\end{align}

Minimizing this quantity is equivalent to minimizing the KL divergence between the target and predictive distributions, as the entropy of the true data distribution is constant. Thus, the optimization objective can be formulated as:
%\begin{align}
%    \min \log_{2} \text{TER}(\hat{p}(\cdot|\mathbf{s}); p(\cdot|\mathbf{s})) \equiv \min \frac{1}{N}\sum_{i=1}^{N} D_{\text{KL}}(p(x_i|\mathbf{s}) \| \hat{p}_{\theta}(x_{i}|\mathbf{c}_{\cdot,i},\mathbf{s})), \label{kl actual and estimate}
%\end{align}
\begin{align}
\log_2 \mathrm{TER}(\hat{p}(\cdot|\mathbf{s}); p(\cdot|\mathbf{s})) \propto \frac{1}{N}\sum_{i=1}^{N} D_{\text{KL}}(p(x_i|\mathbf{s}) \| \hat{p}_{\theta}(x_{i}|\mathbf{c}_{\cdot,i},\mathbf{s})), \label{kl actual and estimate}
\end{align}

Based on this analysis, we identify two critical limitations in existing sampling procedures:
\begin{itemize}[leftmargin=*]
    \item \textbf{Implicit approximation of target distribution using decoding-time single-step predictive distribution:} Existing sampling procedures implicitly assume that decoding-time single-step predictive distribution is a reliable surrogate for the target distribution, \textit{i.e.,} $p(x_i|\mathbf{s}) \approx \hat{p}_{\theta}(x_{i}|\mathbf{c}_{\cdot,i},\mathbf{s})$. However, such an approximation may be vulnerable when the decoding-time context $\mathbf{c}_{\cdot,i}$  in the current step is itself inaccurate, resulting in an unreliable and degraded predictive distribution. Correct sampling results cannot be guaranteed.
    \item \textbf{The TER for conditional generation cannot be governed appropriately.} Once an incorrect predictive distribution arising from an ineffective decoding-time context is taken into account, there is a large distribution discrepancy (\textit{i.e.,} $D_{\text{KL}}$ is large) between the single-step predictive distribution and the target distribution in Eq. (\ref{kl actual and estimate}). An appropriate level of TER thus cannot be guaranteed.
\end{itemize}

\subsection{Approximation of Target Distribution Using Marginalized Contexts}
To address these limitations, a straightforward approach would be to compute the target marginal distribution $p(x_i \mid \mathbf{s})$ directly, thereby bypassing the bias of any single context. While computing the true marginal is intractable, we propose approximating it by exploiting the intrinsic iterative nature of DLMs.

\textbf{Core insight.} Unlike autoregressive models, a DLM generates multiple predictive distributions for a masked position $i$ across the decoding trajectory (steps $T \to t$) before that position is finally unmasked. These intermediate predictions are conditioned on evolving contexts. We can approximate the true marginal distribution by integrating over these contexts. Mathematically, for any joint distribution, we can express the marginal distribution by introducing an auxiliary variable and integrating it out:
\begin{equation}
p(x_i \mid \mathbf{s}) = \int p(x_i, \mathbf{c} \mid \mathbf{s}) d\mathbf{c} = \int p(x_i \mid \mathbf{c}, \mathbf{s}) p(\mathbf{c} \mid \mathbf{s}) d\mathbf{c}.
\end{equation}

In the context of diffusion, the decoding trajectory provides a sequence of context states. We treat the sequence of contexts $\{\mathbf{c}_{T}, \dots, \mathbf{c}_{t}\}$ generated by the model as samples from the context distribution $p(\mathbf{c} \mid \mathbf{s})$. We approximate the integral via an ensemble average over the trajectory:

\begin{equation}
p(x_i \mid \mathbf{s}) \approx \overline{p}(x_i \mid \mathbf{s}) \triangleq \frac{1}{T-t+1}\sum_{k=0}^{T-t}\hat{p}_{\theta}(x_i \mid x_{T-k,i}=\operatorname{M}, \mathbf{c}_{T-k,i}, \mathbf{s}), \label{approximated computation}
\end{equation}
where $\overline{p}(x_i \mid \mathbf{s})$ is the approximated target distribution. This effectively performs a Monte Carlo estimation using $T-t$ correlated samples from the decoding process. Without the loss of generalization, one can integrate the approximated target distribution into existing sampling procedures in a plug-and-play manner, as follows:

\begin{equation}
    x_{t,i} = \begin{cases}
        \arg\max\limits_{k \in \mathbb{X}} \ p_{t,i}^{k} & \text{if } i \in \mathcal{J}_{t} \\
        x_{t+1,i} & \text{otherwise}
    \end{cases}, \quad \text{with} \quad
    \begin{aligned}
        &p_{t,i} = \overline{p}(x_i \mid \mathbf{s}), \\
        &\mathcal{J}_{t} = \underset{\mathcal{S} \subset \mathcal{K}_{t}, |\mathcal{S}|=b_t}{\arg\max} \sum_{i \in \mathcal{S}} -H(p_{t,i}).
    \end{aligned}
    \label{sampling procedure equation 2}
\end{equation}

Therefore, from the perspective of the overall decoding trajectory, Eq. (\ref{sampling procedure equation 2}) actually prioritizes those tokens that exhibit high \textit{confidence} and \textit{predictive consistency} across steps. In other words, if a token's distribution fluctuates dramatically due to changing contexts, the entropy of the averaged distribution $\overline{p}$ will be high, deprioritizing its selection. Conversely, tokens that are robust to context variations will be selected earlier. Instead, Eq. (\ref{eq:sampling procedure equation 1}) only considers tokens with high confidence in the current step, leading to an unreliable decoding process.

\textbf{Intuitive understandings.} First, by averaging over multiple contexts rather than relying on a single decoding-time context, our method becomes less sensitive to errors in any particular context. If some contexts $\mathbf{c}_{t,i}$ are noisy or incorrect, their negative impact is diluted by averaging with predictions from better contexts. Second, our method uses predictions that are already computed during the standard iterative decoding process, making it computationally efficient—essentially "free" in terms of additional forward passes through the model.

\textbf{Theoretical Analysis.}
We now demonstrate that our approximated target distribution connects theoretically to the single-step distribution and governs the sampling error bound.

\begin{proposition}[Connection via Mutual Information]
\label{prop:mutual_info_connection}
The approximated target distribution obtained by marginalizing over contexts is related to the decoding-time single-step predictive distribution through the conditional mutual information between contexts and token predictions:
\begin{equation}
H(x_i|\mathbf{s}) = H(x_i|\mathbf{c},\mathbf{s}) + I(x_i;\mathbf{c}|\mathbf{s}) \propto H(x_i|\mathbf{c}_{\cdot,i},\mathbf{s}) + I(x_i;\mathbf{c}|\mathbf{s}) \label{h plus i}
\end{equation}
where $I(x_i;\mathbf{c}|\mathbf{s})$ is the conditional mutual information between $x_i$ and $\mathbf{c}$ given $\mathbf{s}$, and $\mathbf{c}_{\cdot,i}$ is the specific decoding-time context.
\label{Proposition 1}
\end{proposition}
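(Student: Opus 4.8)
The plan is to observe that the first equality in the statement is just the chain rule for entropy written as the definition of conditional mutual information, and that the ``$\propto$'' is the one genuinely approximate step, in which the context-averaged conditional entropy is collapsed onto a single representative decoding-time term using the Monte-Carlo viewpoint of Eq.~(\ref{approximated computation}).

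First I would fix the prompt $\mathbf{s}$ throughout and treat $(x_i,\mathbf{c})$ as jointly distributed under the factorization $p(x_i,\mathbf{c}\mid\mathbf{s}) = p(x_i\mid\mathbf{c},\mathbf{s})\,p(\mathbf{c}\mid\mathbf{s})$, exactly the factorization already used to write the marginal $p(x_i\mid\mathbf{s}) = \int p(x_i\mid\mathbf{c},\mathbf{s})\,p(\mathbf{c}\mid\mathbf{s})\,d\mathbf{c}$ preceding Eq.~(\ref{approximated computation}). By the definition of conditional mutual information, $I(x_i;\mathbf{c}\mid\mathbf{s}) = H(x_i\mid\mathbf{s}) - H(x_i\mid\mathbf{c},\mathbf{s})$, where $H(x_i\mid\mathbf{s})$ is the entropy of the true marginal — the object $\overline{p}(x_i\mid\mathbf{s})$ estimates — and $H(x_i\mid\mathbf{c},\mathbf{s}) = \mathbb{E}_{\mathbf{c}\sim p(\mathbf{c}\mid\mathbf{s})}\big[H(p(x_i\mid\mathbf{c},\mathbf{s}))\big]$ is the context-averaged single-step entropy. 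Rearranging gives the first equality. I would also point out that, because entropy is concave, $I(x_i;\mathbf{c}\mid\mathbf{s}) \ge 0$ is precisely the Jensen gap between the entropy of the context-mixture and the mixture of per-context entropies; this makes the ``consistency'' reading exact, since $I$ is large exactly when the single-step distributions $p(x_i\mid\mathbf{c},\mathbf{s})$ swing with the context $\mathbf{c}$.

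Next, for the ``$\propto$'' I would expand $H(x_i\mid\mathbf{c},\mathbf{s})$ as the average over the context distribution and invoke Eq.~(\ref{approximated computation}): the decoding trajectory supplies the correlated samples $\mathbf{c}_{T-k,i}$ of $p(\mathbf{c}\mid\mathbf{s})$, and the decoding-time context $\mathbf{c}_{\cdot,i}$ is one representative such sample. Replacing the average of the per-context entropies by this representative term — equivalently, up to a multiplicative constant fixed by the effective sample count $T-t+1$ — yields $H(x_i\mid\mathbf{c},\mathbf{s}) \propto H(x_i\mid\mathbf{c}_{\cdot,i},\mathbf{s})$, hence the displayed relation. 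I would close by tying this to the sampler of Eq.~(\ref{sampling procedure equation 2}): its selection score $-H(\overline{p}(x_i\mid\mathbf{s}))$ estimates $-H(x_i\mid\mathbf{s}) = -H(x_i\mid\mathbf{c}_{\cdot,i},\mathbf{s}) - I(x_i;\mathbf{c}\mid\mathbf{s})$ up to the same constant, so that prioritizing low entropy of $\overline{p}$ simultaneously rewards high single-step confidence (small $H(x_i\mid\mathbf{c}_{\cdot,i},\mathbf{s})$) and high cross-step predictive consistency (small $I(x_i;\mathbf{c}\mid\mathbf{s})$), which is the point of the proposition.

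The hard part will be making the ``$\propto$'' quantitatively honest rather than heuristic: the identity ``$=$'' is unconditional, but collapsing the context-averaged entropy onto the single decoding-time entropy requires either a concentration assumption — that $H(p(x_i\mid\mathbf{c},\mathbf{s}))$ varies little as $\mathbf{c}\sim p(\mathbf{c}\mid\mathbf{s})$ — or working directly with the empirical trajectory average and bounding the Monte-Carlo error from the $T-t$ correlated samples. I would therefore state the proportionality as holding up to this approximation, whose residual is of the same (second) order as the estimation error already accepted in Eq.~(\ref{approximated computation}), and keep the entropy decomposition $H(x_i\mid\mathbf{s}) = H(x_i\mid\mathbf{c},\mathbf{s}) + I(x_i;\mathbf{c}\mid\mathbf{s})$ as the exact backbone of the argument.
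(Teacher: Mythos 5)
Your proposal takes essentially the same route as the paper's proof: the exact identity $H(x_i\mid\mathbf{s}) = H(x_i\mid\mathbf{c},\mathbf{s}) + I(x_i;\mathbf{c}\mid\mathbf{s})$ via the chain rule / definition of conditional mutual information, followed by justifying the ``$\propto$'' through $H(x_i\mid\mathbf{c},\mathbf{s}) \approx H(x_i\mid\mathbf{c}_{\cdot,i},\mathbf{s})$ on the grounds that the decoding-time context is a representative sample of $p(\mathbf{c}\mid\mathbf{s})$. Your Jensen-gap reading of $I$ and the explicit concentration caveat simply make the paper's (admittedly heuristic) approximation step more honest; they do not change the argument.
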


\begin{proof}
Since entropy directly measures the uncertainty of distributions and determines the selection of tokens to decode in DLMs (tokens with lower entropy are typically decoded first), we analyze the relationship through entropy decomposition. We start with the chain rule for conditional entropy applied in two different orders:
\begin{align}
H(x_i,\mathbf{c}|\mathbf{s}) = H(\mathbf{c}|\mathbf{s}) + H(x_i|\mathbf{c},\mathbf{s}) = H(x_i|\mathbf{s}) + H(\mathbf{c}|x_i,\mathbf{s})
\end{align}

Rearranging yields:
\begin{align}
H(x_i|\mathbf{s}) &= H(x_i|\mathbf{c},\mathbf{s}) + \underbrace{[H(\mathbf{c}|\mathbf{s}) - H(\mathbf{c}|x_i,\mathbf{s})]}_{I(x_i;\mathbf{c}|\mathbf{s})}= H(x_i|\mathbf{c},\mathbf{s}) + I(x_i;\mathbf{c}|\mathbf{s})
\end{align}

For the specific decoding-time context $\mathbf{c}_{\cdot,i}$, we have $H(x_i|\mathbf{c},\mathbf{s}) \approx H(x_i|\mathbf{c}_{\cdot,i},\mathbf{s})$ when $\mathbf{c}_{\cdot,i}$ is a sufficiently representative sample from the context distribution.
\end{proof}

Proposition \ref{prop:mutual_info_connection} uncovers two advantages of our proposed approximated target distribution over contexts for sampling procedures:  \begin{itemize}[leftmargin=*]
    \item\textbf{Additional consideration of statistical dependencies.} When contexts are informative about $x_i$ (high mutual information), our method captures additional statistical dependencies through the $I(x_i;\mathbf{c}|\mathbf{s})$ term, leading to better-calibrated predictions.
    \item \textbf{Flexible connection with the existing sampling procedure.} Our approach exhibits flexibility—when $x_i$ and $\mathbf{c}$ are conditionally independent given $\mathbf{s}$, we have $I(x_i;\mathbf{c}|\mathbf{s}) = 0$, and our method naturally degrades to the existing single-context method with $H(x_i|\mathbf{s}) = H(x_i|\mathbf{c}_{\cdot,i},\mathbf{s})$. This shows that our marginalization approach is a strict generalization that adapts to the underlying statistical structure of the problem.
\end{itemize}

Moreover, we uncover that our proposed approximated target distribution over contexts can directly govern sampling error upper bound based on Proposition \ref{Proposition 1}. We first recall the sampling error upper bound of DLM for a general non-conditional sampling (generation) process:
\begin{lemma}
\label{thm:main-result}\citep{li2025convergencetheorydiffusionlanguage}
For a well-trained DLM with any uniform mask size schedule, let $\overline{\mathbf{x}}\in \mathbb{X}^{N}$ be the ground-truth token sequence, and let $x_{i}$ and $x_{-i}$ be each token at position $i$ and the rest of the sequence,  the sampling error of the final output sequence $\hat{\mathbf{x}}\in \mathbb{X}^{N}$ of the sampling procedure satisfies 
\begin{align}\label{eq:main-result}
% \frac{1-T/L}{4T} \sum_{i=1}^L I(X_0^{(i)}; X_0^{(-i)}) + \varepsilon_\train \le 
% \mathbb{E}_{M_1,\dots,M_T}\big[\mathsf{KL}(p_{X_0\mid M_1,\dots,M_T}\parallel p_{Y_0\mid M_1,\dots,M_T})\big] 
% \mathsf{KL}(p_{X_0}\parallel p_{Y_0}) \leq
\mathbb{E}\big[\mathsf{KL}(p(\overline{\mathbf{x}})\parallel p(\hat{\mathbf{x}})\big] 
\le \frac{G}{T} \sum_{i=1}^N I(\overline{x}_{i}; \overline{x}_{-i})+ \varepsilon_\train.
\end{align}
Here, the expectation is taken over all possible mask sets throughout sampling processes.    $G$ is a constant depending on the mask schedule, and $T$ is the total number of decoding iterations. The term $\varepsilon_\train$ represents the inherent training error of the mask predictor. $I(\overline{x}_{i}; \overline{x}_{-i})$ represents the mutual information between each token and the rest of the sequence.
\end{lemma}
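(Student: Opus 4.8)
The plan is to prove Lemma~\ref{thm:main-result} by a telescoping decomposition of the end-to-end sampling KL along the reveal order induced by the mask schedule, isolating a training component and a ``parallel-decoding'' component, and then controlling the latter by an averaging argument over the random mask sets. The only genuinely nontrivial point is recovering the factor $1/T$.

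First I would fix a realization of the schedule, which cuts the $N$ positions into ordered batches $B_1,\dots,B_T$ (each of size roughly $N/T$); write $B_{<s}=B_1\cup\cdots\cup B_{s-1}$ and, for a position $i$, let $s(i)$ be its batch. The model reveals $B_s$ by drawing each token independently from $\hat p_\theta(\cdot\mid \overline{\mathbf x}_{B_{<s}})$, so its output law on $\mathbb{X}^N$, read in this order, has conditional $\prod_{i\in B_s}\hat p_\theta(x_i\mid x_{B_{<s}})$, whereas the true reverse process reveals $B_s$ from the joint conditional $p(x_{B_s}\mid x_{B_{<s}})$ and reconstructs $p(\overline{\mathbf x})$ exactly. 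By the chain rule for KL and the data-processing inequality (passing from joints on the reveal order to marginals on $\mathbb{X}^N$),
\[
\mathbb{E}\big[\KL(p(\overline{\mathbf x})\,\|\,p(\hat{\mathbf x}))\big]
\;\le\; \sum_{s=1}^{T}\mathbb{E}\Big[\KL\big(p(x_{B_s}\mid x_{B_{<s}})\,\big\|\,\textstyle\prod_{i\in B_s}\hat p_\theta(x_i\mid x_{B_{<s}})\big)\Big].
\]
The next step is the elementary identity
\[
\KL\big(p(x_{B_s}\mid x_{B_{<s}})\,\|\,\textstyle\prod_{i\in B_s}\hat p_\theta(x_i\mid x_{B_{<s}})\big)
= \sum_{i\in B_s}\KL\big(p(x_i\mid x_{B_{<s}})\,\|\,\hat p_\theta(x_i\mid x_{B_{<s}})\big) + \mathrm{TC}\big(x_{B_s}\mid x_{B_{<s}}\big),
\]
where $\mathrm{TC}$ is the total correlation (multi-information) of the batch given its context. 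Summed over $s$ and over the true data and schedule, the first family of terms is exactly the population masked-diffusion denoising objective evaluated at the true trajectory conditionals, hence at most $\varepsilon_\train$ for a well-trained predictor; this is where that term enters.

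It remains to bound $\sum_s \mathbb{E}[\mathrm{TC}(x_{B_s}\mid x_{B_{<s}})]$ by $\frac{G}{T}\sum_i I(\overline x_i;\overline x_{-i})$. Fixing an arbitrary within-batch order and applying the chain rule for multi-information, $\sum_s\mathrm{TC}(x_{B_s}\mid x_{B_{<s}})$ telescopes to $\sum_{i=1}^N I\big(x_i;\,(B_{s(i)})_{<i}\,\big|\,B_{<s(i)}\big)$: for each $i$, the information it shares with the positions revealed in its own batch but ordered before it, given the earlier batches. A crude bound $I(x_i;A\mid B)\le I(x_i;x_{-i})$ recovers only $\sum_i I(\overline x_i;\overline x_{-i})$, with no $1/T$; the gain comes from averaging over the schedule. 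Writing $f_i(m)=\mathbb{E}[H(x_i\mid x_S)]$ for a uniformly random $m$-subset $S\subseteq[N]\setminus\{i\}$ — nonincreasing in $m$, with $f_i(0)-f_i(N-1)=I(\overline x_i;\overline x_{-i})$ — a direct computation over the random schedule gives
\[
\mathbb{E}\big[I(x_i;(B_{s(i)})_{<i}\mid B_{<s(i)})\big]=\tfrac1T\textstyle\sum_{j=0}^{T-1}f_i(jN/T)-\tfrac1N\sum_{m=0}^{N-1}f_i(m),
\]
the difference of a coarse and a fine left Riemann sum of the same decreasing function, which is at most $\tfrac1T\big(f_i(0)-f_i(N-1)\big)=\tfrac1T I(\overline x_i;\overline x_{-i})$. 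Summing over $i$ and absorbing schedule irregularities (non-integer $N/T$, a shorter last batch, deterministic vs.\ randomized orderings) into the constant $G$ closes the argument.

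The hard part is this last step: the naive union bound over the chain-rule terms loses the factor $1/T$, and recovering it genuinely requires the schedule-averaging together with monotonicity of conditional entropy in the number of conditioning variables (the Riemann-sum comparison). A secondary bookkeeping point is verifying that the aggregated denoising terms coincide exactly with the quantity that ``well-trained'' is assumed to control, so that the $\varepsilon_\train$ accounting is clean and no cross terms are dropped.
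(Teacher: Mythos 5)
This lemma is not proved in the paper at all: it is imported verbatim from \citep{li2025convergencetheorydiffusionlanguage}, so there is no internal proof to compare your argument against, and your reconstruction has to be judged on its own merits. On those merits it is sound, and it follows the same information-theoretic route as the cited convergence analysis: the KL chain rule along the reveal order, the exact split of each step's conditional KL into per-token denoising KLs (the $\varepsilon_\train$ part) plus a conditional total correlation, the telescoping of the total-correlation terms into $\sum_i I\big(x_i;(B_{s(i)})_{<i}\mid B_{<s(i)}\big)$, and the schedule-averaging (coarse-vs-fine Riemann sum of the nonincreasing map $m\mapsto \mathbb{E}[H(x_i\mid x_S)]$) that recovers the $1/T$ factor; that last comparison is indeed the crux and your bound $\le \frac{1}{T}\big(f_i(0)-f_i(N-1)\big)=\frac{1}{T}I(\overline{x}_i;\overline{x}_{-i})$ is correct.

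Two points deserve more care than your sketch gives them. First, the sampler conditions on its \emph{own} previously generated tokens, not on $\overline{\mathbf{x}}_{B_{<s}}$; the decomposition survives because the KL chain rule for the trajectory laws evaluates both transition kernels at prefixes drawn from the true process, but your sentence ``the model reveals $B_s$ by drawing each token independently from $\hat p_\theta(\cdot\mid \overline{\mathbf{x}}_{B_{<s}})$'' should be phrased as a statement about kernels rather than about the sampler's actual inputs, otherwise the step looks like it silently replaces the model's prefix by the ground truth. Second, the $1/T$ gain hinges on the mask sets being uniformly random, which is exactly what the lemma's expectation is over, but is \emph{not} how the paper's confidence-ranked schedules choose positions, so one should not read your argument as covering those; relatedly, identifying the aggregated per-token KLs with $\varepsilon_\train$ requires the mask-ratio weighting of the masked-diffusion training objective to match the schedule's prefix-size distribution, a mismatch that must be absorbed into $G$ or into the definition of $\varepsilon_\train$, as you briefly acknowledge. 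With those caveats made explicit, your proposal is a faithful reconstruction of the external result.
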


Note that the bound reveals that the potential sampling error depends on the intrinsic statistical coupling between tokens in the sequence. Specifically, sequences with higher mutual information between tokens exhibit larger potential sampling errors. Based on this well-constructed sampling error upper bound, we can make the following proposition for conditional generation:

\begin{proposition}[Sampling Error Governance via Marginalized Contexts]
\label{cor:marginalized-governance}
Under conditional generation with our approximated target distribution $\overline{p}(x_i|\mathbf{s})$ obtained by marginalizing over contexts, as $t \to 0$, the sampling error bound is governed by:
\begin{align}
\mathbb{E}\big[\mathsf{KL}(p(\overline{\mathbf{x}}|\mathbf{s})\parallel p(\hat{\mathbf{x}}|\mathbf{s}))\big]
&\le \frac{G}{T} \sum_{i=1}^N I(\overline{x}_{i}; \overline{x}_{-i}|\mathbf{s}) + \varepsilon_\train\
&\approx \frac{G}{T} \sum_{i=1}^N \left[\frac{1}{T-t+1}\sum_{k=0}^{T-t} I(\overline{x}_{i}; \mathbf{c}_{T-k,i}|\mathbf{s})\right] + \varepsilon_\train
\end{align}
This shows that our marginalization-based sampling procedure effectively governs the same sampling error bound at decoding time by using the averaged mutual information over the decoding trajectory. \label{propostion 2}
\end{proposition}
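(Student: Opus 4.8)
The plan is to obtain Proposition~\ref{propostion 2} in two moves: first specialize the non-conditional bound of Lemma~\ref{thm:main-result} to the conditional setting, and then rewrite the sequence-level mutual information $I(\overline{x}_i;\overline{x}_{-i}\mid\mathbf{s})$ as the trajectory-averaged context mutual information using the same marginalization identity that underlies Eq.~(\ref{approximated computation}) and Proposition~\ref{prop:mutual_info_connection}. For the first move, I would note that conditioning on the fixed prompt $\mathbf{s}$ throughout the analysis of~\cite{li2025convergencetheorydiffusionlanguage} is inert: all distributions, entropies, and KL terms carry $\mathbf{s}$ as a fixed conditioning event, the mask schedule and constant $G$ are unchanged, and the training error $\varepsilon_\train$ is defined for the same predictor. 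Hence the bound transfers verbatim with every information quantity replaced by its $\mathbf{s}$-conditioned counterpart, giving the first line
\begin{equation*}
\mathbb{E}\big[\mathsf{KL}(p(\overline{\mathbf{x}}\mid\mathbf{s})\parallel p(\hat{\mathbf{x}}\mid\mathbf{s}))\big]\le \frac{G}{T}\sum_{i=1}^{N} I(\overline{x}_i;\overline{x}_{-i}\mid\mathbf{s}) + \varepsilon_\train .
\end{equation*}

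For the second move, the key observation is that at decoding time the ``rest of the sequence'' $\overline{x}_{-i}$ is only ever accessed through the contexts $\mathbf{c}_{\cdot,i}$ that the model actually conditions on for position $i$; indeed the predictive distribution in Eq.~(\ref{eq:sampling procedure equation 1}) is $\hat p_\theta(\cdot\mid x_{t+1,i}=\operatorname{M},\mathbf{c}_{t,i},\mathbf{s})$, so $\mathbf{c}_{\cdot,i}$ plays the role of the effective summary of $\overline{x}_{-i}$. Treating the trajectory contexts $\{\mathbf{c}_{T,i},\dots,\mathbf{c}_{t,i}\}$ as samples from $p(\mathbf{c}\mid\mathbf{s})$ exactly as in Eq.~(\ref{approximated computation}), and using the same Monte Carlo ensemble average over the $T-t+1$ trajectory steps, I would approximate
\begin{equation*}
I(\overline{x}_i;\overline{x}_{-i}\mid\mathbf{s}) \;\approx\; I(\overline{x}_i;\mathbf{c}\mid\mathbf{s}) \;\approx\; \frac{1}{T-t+1}\sum_{k=0}^{T-t} I(\overline{x}_i;\mathbf{c}_{T-k,i}\mid\mathbf{s}),
\end{equation*}
where the first $\approx$ is the ``sufficiently representative context'' approximation already invoked in the proof of Proposition~\ref{prop:mutual_info_connection} (the context is a near-sufficient statistic for $\overline{x}_{-i}$ with respect to $\overline{x}_i$), and the second is the trajectory ensemble average in the limit $t\to 0$, when the decoding process has produced the full collection of contexts. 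Substituting into the first-line bound yields the displayed second line and completes the argument.

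The main obstacle is the first $\approx$: equating $I(\overline{x}_i;\overline{x}_{-i}\mid\mathbf{s})$ with $I(\overline{x}_i;\mathbf{c}\mid\mathbf{s})$ is only exact if the decoding-time context is a sufficient statistic for the rest of the sequence, which holds cleanly under the paper's working assumption (made just before Eq.~(\ref{kl actual and estimate})) that the target factorizes across positions given $\mathbf{s}$ and that the model's conditioning is well-specified; I would state this explicitly as the regime in which the approximation is tight, and note that data-processing gives the one-sided inequality $I(\overline{x}_i;\mathbf{c}\mid\mathbf{s})\le I(\overline{x}_i;\overline{x}_{-i}\mid\mathbf{s})$ whenever $\mathbf{c}$ is a function of $\overline{x}_{-i}$, so the trajectory-averaged quantity is at worst a slight relaxation rather than an inflation of the bound. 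The ensemble-average step is then routine given Eq.~(\ref{approximated computation}), and the $t\to 0$ limit is just the statement that the full trajectory of contexts has been realized, so no further work is needed there.
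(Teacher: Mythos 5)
Your proposal follows essentially the same route as the paper's own argument: it conditionalizes Lemma~\ref{thm:main-result} on the prompt $\mathbf{s}$ and then replaces $I(\overline{x}_i;\overline{x}_{-i}\mid\mathbf{s})$ by the trajectory-averaged context mutual information via the representative-context approximation of Proposition~\ref{prop:mutual_info_connection} and the $t\to 0$ limit, exactly as the paper does (the paper likewise concedes the averaged term is only a ``lower-bound estimate'' of the true complexity, matching your data-processing remark). Your version is, if anything, slightly more explicit about the conditional transfer and the direction of the approximation, but it is the same proof in substance.
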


\begin{proof}
From Proposition \ref{Proposition 1}, we established that marginalizing over contexts yields:
\begin{equation}
H(\overline{x}_i|\mathbf{s}) = H(\overline{x}_i|\mathbf{c},\mathbf{s}) + I(\overline{x}_i;\mathbf{c}|\mathbf{s})
\end{equation}
During the DLM decoding process, at each iteration $T-k$ before position $i$ is decoded, we have a context $\mathbf{c}_{T-k,i}$ that progressively refines toward the ground-truth context $\overline{x}{-i}$, which can be viewed as the limiting case of this refinement process.
By marginalizing over the trajectory of contexts:
\begin{align}
\overline{p}(x_i|\mathbf{s}) &= \frac{1}{T-t+1}\sum_{k=0}^{T-t}\hat{p}_{\theta}(x_i|x_{T-k,i}=\text{M},\mathbf{c}_{T-k,i},\mathbf{s})
\end{align}

This marginalization effectively computes an averaged mutual information $\frac{1}{T-t+1}\sum_{k=0}^{T-t} I(\overline{x}_{i}; \mathbf{c}_{T-k,i}|\mathbf{s})$.

The mutual information $I(\overline{x}_{i}; \overline{x}_{-i} \mid \mathbf{s})$ represents the dependency of token $i$ on the full context. In our approximation, we substitute the full context $\overline{x}_{-i}$ with the evolving contexts $\{\mathbf{c}_{T-k,i}\}$ from the decoding trajectory.
Since $\overline{p}(x_i \mid \mathbf{s})$ averages the predictive distributions, the effective entropy used for selection is derived from this ensemble. As $t \to 0$, the context $\mathbf{c}_{t,i}$ converges toward the full unmasked sequence $\overline{x}_{-i}$. Therefore, the term $\frac{1}{T-t+1}\sum_{k=0}^{T-t} I(\overline{x}_{i}; \mathbf{c}_{T-k,i} \mid \mathbf{s})$ serves as a tractable lower-bound estimate of the true mutual information complexity, governing the sampling error dynamically during decoding as $t \to 0$:
\begin{equation}
\mathbb{E}\big[\mathsf{KL}(p(\overline{\mathbf{x}}|\mathbf{s})\parallel p(\hat{\mathbf{x}}|\mathbf{s}))\big] \le \frac{G}{T} \sum_{i=1}^N \left[\frac{1}{T-t+1}\sum_{k=0}^{T-t} I(\overline{x}_{i}; \mathbf{c}_{T-k,i}|\mathbf{s})\right] + \varepsilon_\train
\end{equation}
\end{proof}

Proposition \ref{propostion 2} uncovers two advantages of our proposed approximated target distribution over contexts for sampling procedures: \begin{itemize}[leftmargin=*]
    \item\textbf{Our method provides theoretical optimality guarantees.} Benefiting from the implicit introduction of mutual information as in Eq. (\ref{h plus i}), we can approximately maintain an equivalent sampling error bound structure with the ideal one. This transforms an inaccessible theoretical optimum into a practically computable objective, ensuring provably controlled sampling error at every decoding step.
    \item \textbf{An implicit difficulty-aware scheduling mechanism.} As our marginalization implicitly encodes the mutual information between each token and its evolving contexts, this may create a property: tokens with low mutual information (low sampling difficulty) are confidently decoded early, while tokens with high mutual information (high sampling difficulty) are strategically deferred. 
    From a perspective of the overall trajectory, the more contexts (more unmasked tokens) allow these difficult tokens to be decoded more easily (at a later stage).
\end{itemize}

\subsection{Coherent Contextual Decoding with Adaptive Sampling}
%For Eq. (\ref{sampling procedure equation 2}), if we straightforwardly implement the proposed approximated target distribution for a practical sampling procedure, there will be two limitations. (i) \textbf{Storing predictive distributions of all mask tokens is expensive.} For Eq. (\ref{approximated computation}), one would require to store a maximum of $(T-t)\times N \times |\overline{\mathbb{X}}|$ probability masses. In the scenario of long-sentence generation (\textit{i.e.,} $N$ is large), such an additional cost may not be acceptable for the desire for efficient decoding, especially in the limited memory of edge devices. (ii) \textbf{ The information usability of predictive distributions across the overall $T-t$ diffusion steps is not uniform.} It is obvious that the predictive distribution may not be very informative in the early diffusion steps with limited contexts, especially for those semantically ambiguous tokens. These early-stage predictions may be noisy results in Monte Carlo sampling, which in turn degrades the effectiveness of approximated target distributions. 
While Eq. (\ref{sampling procedure equation 2}) provides a theoretically principled approach, directly implementing the proposed approximated target distribution presents two practical challenges:
(i) \textbf{Non-uniform information quality across diffusion steps.} Commonly, the predictive distribution may not be very informative in the early diffusion steps due to the limited number of contexts, especially for those semantically ambiguous tokens. These early-stage predictions may be regarded as noisy results in Monte Carlo sampling, which degrades the effectiveness of approximated target distributions. 
(ii) \textbf{Prohibitive memory requirements for storing predictive distributions.} Implementing Eq. (\ref{approximated computation}) naively requires storing up to $(T-t)\times N \times |\overline{\mathbb{X}}|$ probability values, \textit{i.e.,} the full distribution for every masked token at every iteration. For long-sequence generation where $N$ is large, this memory overhead becomes prohibitive, particularly for deployment on memory-constrained edge devices where efficient decoding is paramount.

To address these practical challenges, we introduce a sliding-window historical buffer that maintains only the \textit{most recent and informative predictive distributions}. Specifically, we define a historical buffer $\mathcal{H}_t$ at iteration $t$ that stores the predictive distributions from the most recent $d$ iterations (except for the current iteration $t$) with only the top-$V$ most confident tokens at each iteration. Formally, $\mathcal{H}_t$ can be represented as follows
%\begin{equation}
%\mathcal{B}_t = \bigcap_{j=1}^{\min(d, T-t)} \mathcal{D}_{t+j}, \operatorname{where}\quad\mathcal{D}_{t+j} = {(i, \hat{p}_{\theta}(x_i|\mathbf{c}_{t+j,i}, \mathbf{s})) : i \in \arg \operatorname{top}_V \{-H(\hat{p}_{\theta}(x_i|\mathbf{c}_{t+j,i}, \mathbf{s}))\}_{i \in \mathcal{K}_{t+j}}}. \label{history}
%\end{equation}
%$\mathcal{D}_{t+j}$ is a set of position-distribution pairs of top-$V$ confident masked tokens. Note that the intersection operation collects all pairs across iterations, allowing the same position $i$ to appear multiple times in $\mathcal{B}_t$ if it was among the top-$V$ confident positions in multiple iterations.
$$\mathcal{H}_t = \{(i, j, \hat{p}_{\theta}(x_i|\mathbf{c}_{t+j,i}, \mathbf{s})) : i \in \mathcal{I}_t, j \in \{1, ..., \min(d, T-t)\}\}$$
%\begin{equation}
%\mathcal{H}_t = {(i, j, \hat{p}_{\theta}(x_i|\mathbf{c}_{t+j,i}, \mathbf{s})) : i \in \mathcal{I}_t, j \in {1, ..., \min(d, T-t)}},
%\label{history}\nonumber
%\end{equation}
\begin{equation}
    \mathcal{I}_t = \bigcap_{j=1}^{\min(d, T-t)} \{ i : i \in \underset{\mathcal{S} \subset \mathcal{K}_{t+j}, |\mathcal{S}|=V}{\arg\max} \sum_{i \in \mathcal{S}} -H(\hat{p}_{\theta}(x_i|\mathbf{c}_{t+j,i}, \mathbf{s})) \}
\label{consistent_positions}
\end{equation}
where $\mathcal{I}_t$ identifies mask token positions that appear in the top-$V$ confident sets across all recent $d$ iterations (intersection of position indices), which ensures  maximized effective context observations in the later computation of approximated target distribution. 
Based on $\mathcal{I}_t$, $\mathcal{H}_t$ maintains these selected token position indices, iteration indices, and corresponding predictive distribution in specific iteration index. $\mathcal{H}_t$ can be continually updated with iterations, and it always track consistently confident tokens in the most recent $d$ iterations in a sliding-window manner.

% This operation also  marginalization is applied selectively to consistently confident tokens with maximized effective context observations. 
At the current iteration $t$, we also identifies mask token positions that appear both in the current top-$V$ set and in the historical buffer. Finally, we can obtain consistently confident tokens with maximized effective context observations.
Formally, the position set $\mathcal{I}_t^c$ of these tokens can be formulated as follows
%those that demonstrate prediction stability across multiple iterations. This dual filtering prevents degradation from uninformative predictions while maximizing the benefits of marginalization for tokens that truly benefit from multiple 
\begin{equation}
\mathcal{I}_t^{c} = \{ i : i \in \underset{\mathcal{S} \subset \mathcal{K}_{t}, |\mathcal{S}|=V}{\arg\max} \sum_{i \in \mathcal{S}} -H(\hat{p}_{\theta}(x_i|\mathbf{c}_{t+j,i}, \mathbf{s})) \} \cap \{i : (i, \cdot,\cdot) \in \mathcal{H}_t \}.
\end{equation}

According to $\mathcal{I}_t^{c}$, we can obtain a current buffer $\mathcal{H}_t^{c}= \{(i, j, \hat{p}_{\theta}(x_i|\mathbf{c}_{t+j,i}, \mathbf{s})) : i \in \mathcal{I}_t^c, j \in \{0, ..., \min(d, T-t)\}\}$, which incorporates the history buffer and current predictions, to compute the approximated target distribution $\overline{p}(x_i|\mathbf{s})$ as in Eq. (\ref{approximated computation}) (where the number of Monte Carlo sampling is $d+1$ for each selected token).
The sampling procedure then becomes:
\begin{equation}
    x_{t,i} = \begin{cases}
        \arg\max\limits_{k \in \mathbb{X}} \ p_{t,i}^{k} & \text{if } i \in \mathcal{J}_{t} \\
        x_{t+1,i} & \text{otherwise}
    \end{cases}, \quad \text{with} \quad
    \begin{aligned}
        &p_{t,i} = \overline{p}(x_i \mid \mathbf{s}), i \in \mathcal{I}_t^{c} \\
        &\mathcal{J}_{t} = \underset{\mathcal{S} \subset \mathcal{I}_t^{c}, |\mathcal{S}|=b_t}{\arg\max} \sum_{i \in \mathcal{S}} -H(p_{t,i})
    \end{aligned}
    \label{pratical decoding}
\end{equation}

%\begin{equation}
%x_{t,i} = \begin{cases}
%x_{t+1,i} & i \notin \mathcal{J}_{t}, \operatorname{whre}  \mathcal{J}_{t} = \arg \operatorname{top}_{b_t}\{-H(p_{t,i})\}_{i \in \mathcal{I}_{t}^{c}}\\
%\arg\max \{p_{t,i}^{k}\}_{k=1}^{|\mathbb{X}|} & i \in \mathcal{J}_{t}, \quad p_{t,i}=\overline{p}(x_i|\mathbf{s}), i \in \mathcal{I}_t^{c}\\
%\end{cases}
%\label{pratical decoding}
%\end{equation}

\textbf{Discussions.} First, the sliding-window buffer reduces memory complexity from $O((T-t) \times N \times |\mathbb{X}|)$ to $O(d\times V \times |\mathbb{X}|)$, where $d \ll T-t$ and $V \ll N$. By storing only the most recent $d$ iterations and top-$V$ confident tokens per iteration, the approach becomes feasible for long-sequence generation and deployment on memory-constrained devices. Second, by selecting only the top-$V$ most confident predictions (lowest entropy) at each iteration, the buffer automatically filters out noisy, uninformative predictions from early diffusion steps. This ensures that the marginalization only incorporates high-quality, reliable distributions, preventing degradation from poorly-informed early-stage predictions while retaining the benefits of ensemble averaging over multiple context realizations.

\textbf{Adaptive Sampling Budget.} From Eq. (\ref{pratical decoding}), 
we can find two interesting properties for the historical buffer-driven implementation of our proposed approximated target distribution: (i) The selected tokens in $\mathcal{I}_t^{c}$ is consistently confident tokens, which may be more reliable to be candidates for decoding compared with those tokens selected from single-step confidence in Eq. (\ref{eq:sampling procedure equation 1}). (ii) The number of these selected tokens, \textit{i.e.,} $|\mathcal{I}_{t}^{c}|$ will dynamically vary from $0$ to $V$, according to the context sensitivity of generative contents in a diffusion period. For example, $|\mathcal{I}_{t}^{c}|$ tends to be large during context-insensitive generation (\textit{e.g.,} inherent templates or formulaic phrases), while it becomes smaller for context-sensitive content requiring careful semantic disambiguation.

These observations, particularly the second property, stand in stark contrast to existing sampling procedures that employ a uniform sampling budget $b_t$ at each diffusion step. This raises an intriguing possibility: \textit{can we leverage this natural variation to extend the existing uniform sampling budget scheme to a context-dynamically varied one?}
If we do so, there may be two merits. First, semantically ambiguous tokens (high context sensitivity) automatically receive smaller sampling budgets, allowing more iterations to establish proper contexts before decoding. Second, context-insensitive regions can be decoded more aggressively with larger budgets, reducing the overall number of diffusion steps required, which potentially accelerates the diffusion process. The second point is highly related to the generation of the end-of-sentence (EOS) that degrades the efficiency of the decoding process. Please refer to the emprical example in Figure \ref{fig:adaptive sampling} and analyses in sec. \ref{Analyses of Adaptive Sampling Budget}.

Formally, a context-dynamically varied sampling budget scheme can be represented based on Eq. (\ref{pratical decoding}) as follows:
\begin{equation}
    x_{t,i} = \begin{cases}
        \arg\max\limits_{k \in \mathbb{X}} \ p_{t,i}^{k} & \text{if } i \in \mathcal{J}_{t} \\
        x_{t+1,i} & \text{otherwise}
\end{cases}
\end{equation}
where $p_{t,i}=\overline{p}(x_i|\mathbf{s})$ for $i \in \mathcal{I}_t^{c}$, and the decoding set $\mathcal{J}_t$ is determined adaptively:

\begin{equation}
\mathcal{J}_{t} = \begin{cases}
\mathcal{I}_{t}^{c} & \text{if } |\mathcal{I}_{t}^{c}| \leq b_{t} \\
\underset{\mathcal{S} \subset \mathcal{I}_t^{c}, |\mathcal{S}|=b_t}{\arg\max} \sum_{i \in \mathcal{S}} -H(p_{t,i}) \cup \{i \in \mathcal{I}_{t}^{c} : H(p_{t,i}) < \epsilon\} & \text{if } |\mathcal{I}_{t}^{c}| > b_{t}%\operatorname{rank}(-H(p_{t,i})) > b_t \text{ and } H(p_{t,i}) < \epsilon} & \text{if } |\mathcal{I}_{t}^{c}| > b_{t}
\end{cases}
\end{equation}

As we can see,  when $|\mathcal{I}_{t}^{c}| > b_{t}$, we decode the top-$b_t$ tokens plus any additional tokens (ranked from $b_{t}+1$ to $|\mathcal{I}_{t}^{c}|$) whose entropy over marginalized contexts (\textit{i.e.,} marginal entropy) falls below a small threshold $\epsilon$. This ensures that all selected tokens exhibit both high confidence and predictive consistency across diffusion steps. When $|\mathcal{I}_{t}^{c}| \leq b_{t}$, we decode all $|\mathcal{I}_{t}^{c}|$ tokens directly.

\textbf{Discussion.} In this section, we present a comprehensive implementation of the Coherent Contextual Decoding method, along with an adaptive sampling budget extension designed to enhance both inference speed and generation quality. Furthermore, we need to mention that our decoding algorithm is compatible with existing inference optimization techniques, including remasking \cite{wang2025remaskingdiscretediffusionmodels} and block-wise decoding (semi-autoregressive) with KV-cache acceleration \cite{wu2025fastdllmtrainingfreeaccelerationdiffusion}.

\section{Experiments}
\label{sec:Experiment}

\subsection{Experimental Setup}
\begin{itemize} [leftmargin=*]
\item \textbf{Base Models.} We evaluate our approach using two prominent families of Diffusion Language Models (DLMs): LLaDA~\citep{nie2025llada} and Dream~\citep{dream2025}. Specifically, we utilize the \texttt{LLaDA-8B-Instruct} and \texttt{Dream-7B-Instruct} checkpoints. To ensure reproducibility, we employ their official open-source inference codebases and pretrained weights. We adhere strictly to the hyperparameter settings (e.g., diffusion step, temperature, unmasking algorithm) reported in their respective original papers for each benchmark. Both LLaDA and Dream employ a uniform sampling budget where the number of diffusion steps $T$ equals the maximum number of generated tokens $N$, implying a generation budget of $b_t = 1$ per step. We adopt this setting for all base comparisons.

\item \textbf{Baselines.} Our primary baseline is the standard sampling procedure with a uniform budget, as adopted by the original authors. Specifically, the Dream series utilizes negative entropy to select tokens, while the LLaDA series utilizes the maximum probability of single-step predictive distributions. We reproduce these baselines using their default configurations without modification. Note that the LLaDA series enable a block-wise (\textit{a.k.a.} semi-autoregressive) decoding manner that is preferably adopted by mathematical benchmarks with enhanced performance, \textit{e.g.}, GSM8K \citep{cobbe2021gsm8k} and MATH \citep{hendrycksmath2021}. To demonstrate the potential of our proposed method on such a semi-autoregressive scenario, we also utilize the block-wise decoding scheme for the LLaDA series on  GSM8K and MATH benchmarks.

\item \textbf{Datasets and Metrics.} We conduct evaluations across five standard benchmarks covering three distinct capabilities:
\begin{itemize} [leftmargin=*]
    \item \textbf{Mathematical Reasoning:} GSM8K \citep{cobbe2021gsm8k} and MATH \citep{hendrycksmath2021}.
    \item \textbf{Code Generation:} HumanEval \citep{chen2021codex} and MBPP \citep{austin2021structured}.
    \item \textbf{Planning:} The trip planning benchmark \cite{zheng2024naturalplanbenchmarkingllms} contains 1600 examples with different difficulties.
\end{itemize}

All tasks are evaluated in a zero-shot setting, with the exception of the Trip benchmark. We utilize the standardized \texttt{lm-eval} harness \citep{eval-harness} to compute accuracy. Beyond accuracy, we report the average number of decoding steps required. As DLM inference is typically memory-bound, the reduction in decoding steps serves as a direct proxy for inference latency speedup. All experiments were conducted on a single powerful GPU equipped with 80GB of memory and high memory bandwidth.
\end{itemize}

\subsection{Implementation Details}

\begin{itemize} [leftmargin=*]

\item \textbf{Proposed Method (CCD).} Our proposed approach utilizes a historical buffer to approximate the target distribution. This module is designed to be plug-and-play, compatible with existing sampling procedures. We leverage the base model's native uncertainty metric (negative entropy for Dream; max probability for LLaDA) to identify the top-$V$ most confident tokens at each iteration. These tokens construct both the current buffer and the sliding-window historical buffer. The CCD sampling procedure naturally generalizes the standard sampling methods; specifically, when the history length $d=1$ and the candidate set size $V=b_t$, our method reduces mathematically to the default sampling schemes of Dream and LLaDA.

\item \textbf{Hyperparameters.} The performance of CCD is governed by two key hyperparameters: the number of confident tokens retained per iteration ($V$) and the history length ($d$). Unless otherwise stated, we set $V=4$. The parameter $d$ determines the effective context window for the approximated target distribution (Eq. \ref{approximated computation}). Since the predictive stability varies across base models, we tune $d$ specifically for each architecture: we set $d=3$ for the Dream series and $d=2$ for the LLaDA series.

\item \textbf{Dynamic Scheduling (CCD-DS).} For our context-\underline{d}ynamically varied \underline{s}ampling variant (CCD-DS), the threshold $\epsilon$ controls the sensitivity to marginal entropy, determining when to accelerate generation. Tuning a fixed $\epsilon$ across diverse benchmarks is non-trivial. To address this, we introduce a heuristic based on prediction stability to avoid explicit tuning of $\epsilon$. Specifically, at position $i$, if the token index with the maximum probability remains consistent across the $d+1$ iterations in the buffer, we infer that the marginal entropy is sufficiently low (implicitly assuming $H(p_{t,i}) < \epsilon$). This stability criterion allows for adaptive acceleration without manual threshold specification.
\end{itemize}

\subsection{Main results}
\textbf{Performance Improvement with CCD.} We begin by evaluating the effectiveness of our proposed CCD method under a uniform sampling budget of $b_t=1$. As shown in Table \ref{tab:main_table_combined}, the deployment of CCD consistently enhances performance metrics across all five benchmarks for both the LLaDA-8B-Instruct and Dream-7B-Instruct models. These gains are particularly pronounced in complex reasoning tasks; for example, the Dream model achieves a score increase of $+4.65$ on HumanEval and $+1.83$ on the Trip Plan benchmark. This validates that our approximated target distribution effectively guides the model toward higher-quality outputs. Notably, CCD also demonstrates robust compatibility with the block-wise decoding scheme of the LLaDA series on the GSM8K and MATH benchmarks.

\textbf{Enhanced Efficiency and Performance with CCD-DS.} Upon applying the adaptive CCD-DS strategy, we observe a significant inference speedup while simultaneously maintaining or even improving performance.
\begin{itemize}[leftmargin=*]
    \item \textbf{Inference Speedup}: CCD-DS substantially reduces the required diffusion steps. For the Dream model, we observe efficiency gains as high as $3.78\times$ on MBPP and $3.48\times$ on Trip Plan. Similarly, LLaDA achieves a $2.27\times$ speedup on the Trip Plan task.
    \item \textbf{Extra Performance Improvement}: Crucially, this acceleration does not degrade generation quality. In fact, CCD-DS frequently outperforms the fixed-step baseline CCD. For instance, on the Trip Plan benchmark, the Dream model with CCD-DS achieves a score of $19.01$ (vs. $16.93$ with CCD), delivering superior performance alongside a nearly fourfold increase in inference speed.
\end{itemize}

\captionsetup{skip=5pt}
\begin{table*}[h]
  \caption{
    \textbf{Performance of LLaDA 8B and Dream 7B with our proposed method } on 5 benchmarks. Note that 
the diffusion steps are varied across benchmarks and base models, as these settings typically correspond to the best performance on a specific benchmark. Therefore, we follow the base models' default settings without tuning.
  }
  \label{tab:main_table_combined}
  \centering
  \renewcommand{\arraystretch}{1.1}
  \resizebox{1.0\textwidth}{!}{
  \begin{tabular}{c|c|l l|l|c|l l|l}
     \toprule
    \multirow{2}{*}{\bf Task} & \multirow{2}{*}{\bf Method} & \multicolumn{2}{c}{\bf Inference Efficiency} & \multicolumn{1}{|c}{\bf Performance} & \multicolumn{1}{|c|}{\multirow{2}{*}{\bf Method}} & \multicolumn{2}{c}{\bf Inference Efficiency} & \multicolumn{1}{|c}{\bf Performance} \\
    \cmidrule(lr){3-4} \cmidrule(lr){5-5}  \cmidrule(lr){7-8} \cmidrule(lr){9-9} 
    & & {\bf Diffusion steps$\downarrow$} & {\bf Gains$\uparrow$} & {\bf Score$\uparrow$} & & {\bf Diffusion steps$\downarrow$} & {\bf Gains$\uparrow$} & {\bf Score$\uparrow$}\\
  \midrule
   \rowcolor{gray!15} \multicolumn{9}{c}{\bf Mathematics Reasoning}\\
  \midrule
    \multirow{3}{*}{GSM8K}
                           & \textcolor{gray}{LLaDA Instruct} & \textcolor{gray}{512} & \textcolor{gray}{1.00$\times$} & \textcolor{gray}{74.30}
                           & \textcolor{gray}{Dream Instruct} & \textcolor{gray}{256} & \textcolor{gray}{1.00$\times$} &  \textcolor{gray}{81.01} \\
                          
                           & ~+~CCD  & 512 & 1.00$\times$  & \textbf{75.30}$_{\textcolor{LightGreen}{+1.00}}$
                           & ~+~CCD & 256 & 1.00$\times$ & 82.26$_{\textcolor{LightGreen}{+1.25}}$\\
                           &\cellcolor{blue!10} ~+~CCD-DS & \cellcolor{blue!10}393.0$_{\textcolor{LightGreen}{-119.0}}$ & \cellcolor{blue!10}1.31$\times_{\textcolor{LightGreen}{+0.31}}$& \cellcolor{blue!10}75.22$_{\textcolor{LightGreen}{+0.92}}$ 
                          & \cellcolor{blue!10}~+~CCD-DS & \cellcolor{blue!10}141.2$_{\textcolor{LightGreen}{-114.8}}$ & \cellcolor{blue!10}1.82$\times_{\textcolor{LightGreen}{+0.82}}$ & \cellcolor{blue!10}\textbf{82.51} $_{\textcolor{LightGreen}{+1.50}}$ \\

  \midrule
    \multirow{3}{*}{Math}
                          & \textcolor{gray}{LLaDA Instruct} & \textcolor{gray}{512} & \textcolor{gray}{1.00$\times$} & \textcolor{gray}{37.00}
                           & \textcolor{gray}{Dream Instruct} & \textcolor{gray}{512} & \textcolor{gray}{1.00$\times$} &  \textcolor{gray}{40.90} \\
                          
                               & ~+~CCD & 512  & 1.00$\times$& \textbf{37.20}$_{\textcolor{LightGreen}{+0.20}}$ 
                          & ~+~CCD & 512 &1.00$\times$ & \textbf{41.20} $_{\textcolor{LightGreen}{+0.30}}$ \\
                           & \cellcolor{blue!10}~+~CCD-DS & \cellcolor{blue!10}378.2 $_{\textcolor{LightGreen}{-133.8}}$ & \cellcolor{blue!10}1.35$\times_{\textcolor{LightGreen}{+0.35}}$& \cellcolor{blue!10}\textbf{37.20}$_{\textcolor{LightGreen}{+0.20}}$ 
                          & \cellcolor{blue!10}~+~CCD-DS & \cellcolor{blue!10}340.2$_{\textcolor{LightGreen}{-171.8}}$ & \cellcolor{blue!10}1.58$\times_{\textcolor{LightGreen}{+0.58}}$ & \cellcolor{blue!10}\textbf{41.20} $_{\textcolor{LightGreen}{+0.30}}$ \\
                          
  \midrule
  \rowcolor{gray!15} \multicolumn{9}{c}{\bf Code Generation}\\
    \midrule
   \multirow{3}{*}{HumanEval}
                                    & \textcolor{gray}{LLaDA Instruct} & \textcolor{gray}{512} & \textcolor{gray}{1.00$\times$} & \textcolor{gray}{36.50}
                           & \textcolor{gray}{Dream Instruct} & \textcolor{gray}{768} & \textcolor{gray}{1.00$\times$} &  \textcolor{gray}{52.66} \\
                          
                           & ~+~CCD  & 512 & 1.00$\times$  & \textbf{38.41}$_{\textcolor{LightGreen}{+1.91}}$
                           & ~+~CCD & 768 & 1.00$\times$ & \textbf{57.31}$_{\textcolor{LightGreen}{+4.65}}$\\
                           & \cellcolor{blue!10}+~CCD-DS & \cellcolor{blue!10}332.0 $_{\textcolor{LightGreen}{-180.0}}$ & \cellcolor{blue!10}1.54$\times_{\textcolor{LightGreen}{+0.54}}$& \cellcolor{blue!10}38.40$_{\textcolor{LightGreen}{+1.90}}$ 
                          & \cellcolor{blue!10}~+~CCD-DS & \cellcolor{blue!10}253.2$_{\textcolor{LightGreen}{-514.8}}$ & \cellcolor{blue!10}3.04$\times_{\textcolor{LightGreen}{+2.04}}$ & \cellcolor{blue!10}56.71$_{\textcolor{LightGreen}{+4.05}}$ \\
  \midrule
    \multirow{3}{*}{MBPP}
                                & \textcolor{gray}{LLaDA Instruct} & \textcolor{gray}{256} & \textcolor{gray}{1.00$\times$} & \textcolor{gray}{39.20}
                           & \textcolor{gray}{Dream Instruct} & \textcolor{gray}{1024} & \textcolor{gray}{1.00$\times$} &  \textcolor{gray}{58.00} \\
                          
                           & ~+~CCD  & 256 & 1.00$\times$  & 39.20
                           & ~+~CCD & 1024 & 1.00$\times$ & 58.00\\
                           & \cellcolor{blue!10}~+~CCD-DS & \cellcolor{blue!10}211.20 $_{\textcolor{LightGreen}{-44.8}}$ & \cellcolor{blue!10}1.24$\times_{\textcolor{LightGreen}{+0.24}}$& \cellcolor{blue!10}\textbf{39.20}$_{\textcolor{LightGreen}{+0.00}}$ 
                          & \cellcolor{blue!10}~+~CCD-DS & \cellcolor{blue!10}270.20$_{\textcolor{LightGreen}{-753.80}}$ & \cellcolor{blue!10}3.78$\times_{\textcolor{LightGreen}{+2.78}}$ & \cellcolor{blue!10}\textbf{58.00} $_{\textcolor{LightGreen}{+0.00}}$ \\

    \midrule                        
 \rowcolor{gray!15} \multicolumn{9}{c}{\bf Planing}\\
  \midrule
    \multirow{3}{*}{Trip Plan}
                        & \textcolor{gray}{LLaDA Instruct} & \textcolor{gray}{256} & \textcolor{gray}{1.00$\times$} & \textcolor{gray}{10.40}
                           & \textcolor{gray}{Dream Instruct} & \textcolor{gray}{256} & \textcolor{gray}{1.00$\times$} &  \textcolor{gray}{15.10} \\
                          
                           & ~+~CCD  & 256 & 1.00$\times$  & 10.80$_{\textcolor{LightGreen}{+0.40}}$
                           & ~+~CCD & 256 & 1.00$\times$ & 16.93$_{\textcolor{LightGreen}{+1.83}}$  \\
                           & \cellcolor{blue!10}~+~CCD-DS & \cellcolor{blue!10}112.5 $_{\textcolor{LightGreen}{-143.5}}$ & \cellcolor{blue!10}2.27$\times_{\textcolor{LightGreen}{+1.27}}$& \cellcolor{blue!10}\textbf{11.50}$_{\textcolor{LightGreen}{+1.10}}$ 
                          & \cellcolor{blue!10}~+~CCD-DS & \cellcolor{blue!10}75.20$_{\textcolor{LightGreen}{-180.20}}$ & \cellcolor{blue!10}3.48$\times_{\textcolor{LightGreen}{+2.48}}$ & \cellcolor{blue!10}\textbf{19.01} $_{\textcolor{LightGreen}{+3.91}}$ \\
  \bottomrule
  \end{tabular}
  }
\end{table*}

\subsection{Analysis of Adaptive Sampling Budget}
\label{Analyses of Adaptive Sampling Budget}
\textbf{Decoding Efficiency Bottleneck: Generation Plateau Period of DLMs.}
As illustrated in Figure \ref{fig:adaptive sampling}(a), instead of discretely generating EOS tokens across the decoding process, we observe that the decoding process of DLMs actually suffers from continual generation periods of EOS tokens, \textit{i.e.,} multiple plateau periods in the increment of effective tokens (excluding EOS tokens). Since we cannot know the ground-truth number of generative responses beforehand, a more redundant number of mask tokens are typically adopted, resulting in the inevitable generation of these EOS tokens. For the existing uniform sampling budget scheme, it is obvious that the decoding efficiency is very low during these plateau periods.

In contrast, as illustrated in Figure \ref{fig:adaptive sampling}(b), our proposed method can adaptively obtain an ad-hoc sampling budget across the decoding process with maximum budgets set to $4$. Therefore, our proposed method can pass through the plateau periods more efficiently by decoding more EOS tokens in each step. Meanwhile, we observe that our proposed method can automatically obtain smaller sampling budgets (1 or 2) at the late stage of the decoding process, which ensures the accuracy of generation as this stage usually includes semantically ambiguous tokens.

\subsection{Hyperparameter Analysis}
\textbf{Impact of Buffer Size.} Our proposed method relies on the size of the historical buffer for balancing efficiency and effectiveness gains. As shown in Figure~\ref{fig:sub1}, our method consistently outperforms the baseline across all buffer sizes in terms of both performance and inference speed. The accuracy peaks at buffer size 4 (70\%), representing a 20.7\% improvement over the baseline (58\%). Simultaneously, our approach requires substantially fewer steps across all configurations, with the reduction ranging from 29.5\% (180.37 vs. 256 steps at buffer size 1) to 71.4\% (73.21 vs. 256 steps at buffer size 6). Notably, buffer size 4 achieves an optimal balance, delivering the highest accuracy while requiring only 95.54 steps—a 62.7\% reduction in diffusion step. This demonstrates that our method effectively leverages the historical buffer to accelerate decoding process while significantly enhance performance.

\textbf{Robustness to Temperature Coefficients.} As illustrated in Figure~\ref{fig:sub2}, our method demonstrates strong robustness across varying temperature coefficients. Both methods exhibit similar trends, with performance peaking at temperature 0.1 (52.66\% for baseline, 56.71\% for ours) and declining at extreme values. However, our method consistently outperforms the baseline across the entire temperature range, with improvements of 9.8\%, 7.7\%, 1.5\%, 9.0\%, and 2.0\% at temperatures 0, 0.1, 0.4, 0.7, and 1.0, respectively. The performance advantage is most pronounced at lower temperatures (0 and 0.1) and near the upper bound (0.7), suggesting our approach is particularly effective when exploration-exploitation balance is critical. Importantly, even at suboptimal temperatures (0.7 and 1.0), our method maintains meaningful gains, demonstrating robustness to hyperparameter choices.

\begin{figure}[h]
    \centering
    \begin{subfigure}{0.49\textwidth}
        \centering
        \includegraphics[width=\textwidth]{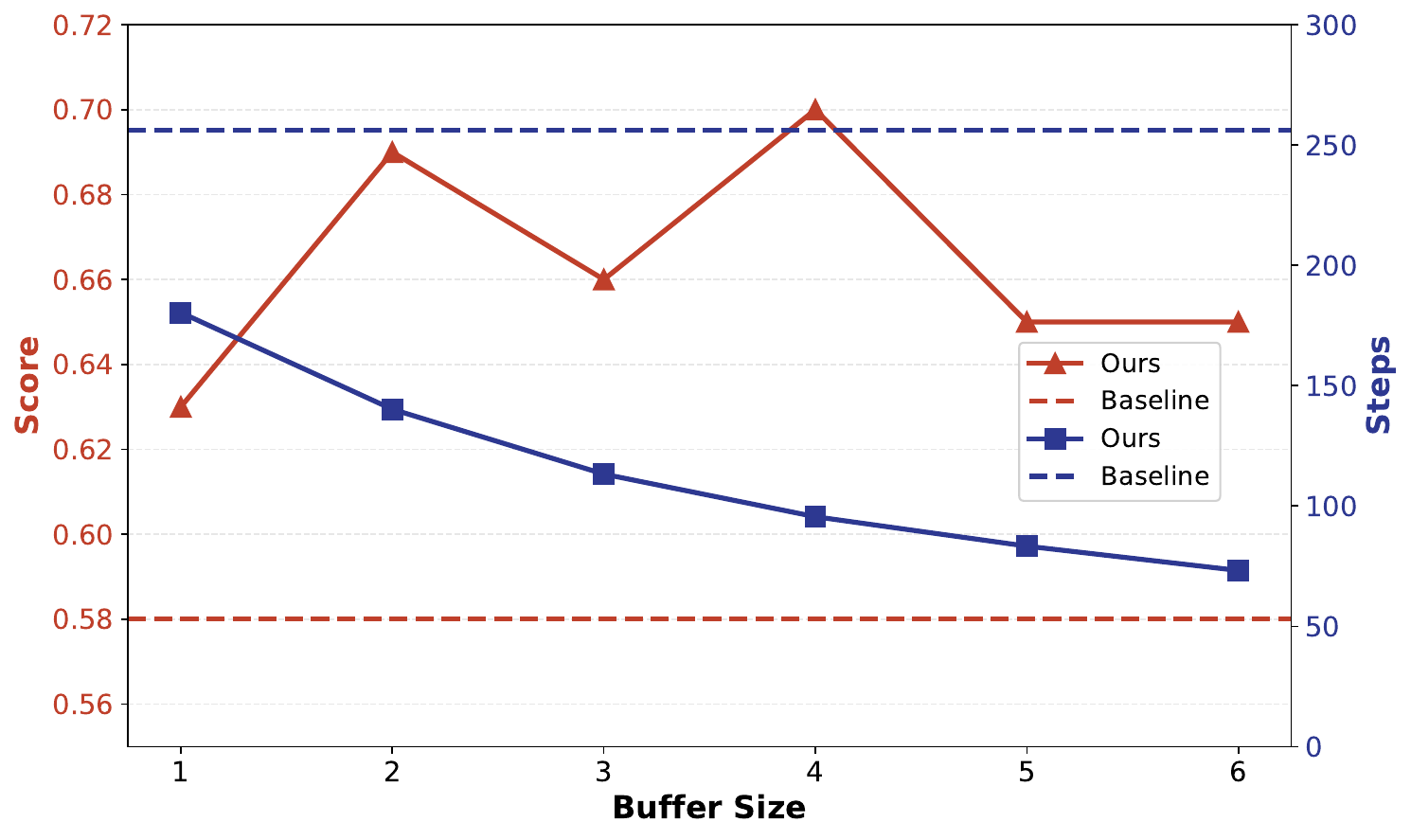}
        \caption{}
        \label{fig:sub1}
    \end{subfigure}
    \hfill
    \begin{subfigure}{0.49\textwidth}
        \centering
        \includegraphics[width=\textwidth]{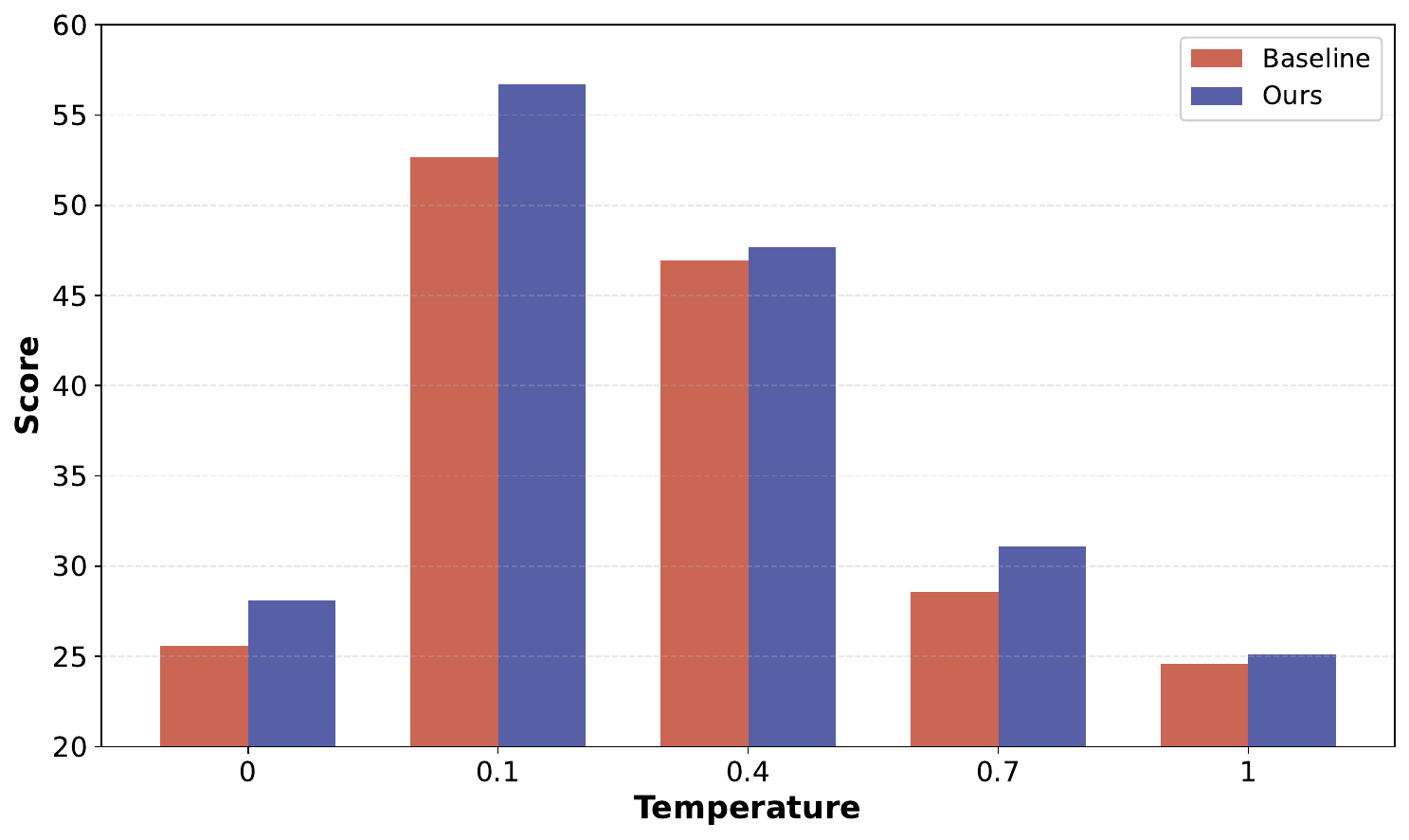}
        \caption{}
        \label{fig:sub2}
    \end{subfigure}
    \caption{Hyperparameter Analysis on buffer size and temperature coefficients using the Dream model. (a) The trade-off between score and computational steps as buffer size varies on the subset (City=3) of Trip benchmark. (b) Performance comparison across different temperature coefficients on the HumanEval benchmark.}
    \label{fig:main}
\end{figure}

\begin{table}[!t]
  \renewcommand{\arraystretch}{0.5}
\centering
\begin{tabular}{c}
\begin{tcolorbox}[title={\textbf{Task:} \emph{Lorie earns \$10 per hour. Karen earns twice what Lorie earns. How much does Karen earn in two days if she works 3 hours per day?
\quad \colorbox{red!30}{\texttt{Incorrect}}\quad \colorbox{green!30}{\texttt{Correct}}}\quad \colorbox{blue!10}{$\mathbf{\otimes}$} Mask token}, colback=white,colframe=black!10, coltitle=black, width=\textwidth]
\noindent \textbf{Baseline:}\; \texttt{Lorie earns \$10 per hour, so in 3 hours she earns \$10 x 3 = \$30.}\\ 
\texttt{Karen earns twice what Lorie earns, so in 3 hours she earns 2 x \$30 = \$60.} \\
\texttt{Karen works 3 hours per day, so in 2 days she works 3x 2 = 6 hours.} \\
\texttt{Therefore, in 2 days, Karen earns \$60 x 6 = \$360.}\tikz[baseline=-0.5ex]{\draw[red, dashed, ->, thick] (0,0) -- (0.6,0);} \textcolor{red}{\textit{It should be  \$60 x 2, rather than \$60 x 6}}\\
\texttt{The answer is:} \colorbox{red!30}{\texttt{360}}\\
\noindent\hdashrule{\textwidth}{0.5pt}{2pt}\\
\vspace{0.3em}
\noindent\colorbox{gray!20}{\parbox{\dimexpr\textwidth-2\fboxsep}{\centering\vspace{0.2em}\textbf{Illustration of diffusion intermediate process  when the generative trajectory starts to be separated}\vspace{0.2em}}}
\vspace{-0.8em}\\
\noindent\textcolor{gray}{\scriptsize token index \texttt{\hspace{2.3em}1\hspace{4.5em}2\hspace{3.3em}3\hspace{1.6em}4\hspace{3.em}5\hspace{1.1em}6\hspace{1.2em}7\hspace{1.4em}8\hspace{1.6em}9\hspace{1.4em}10\hspace{1.4em}11}}\\
\noindent \textbf{Baseline:}\; \texttt{Lorie earns \$10 per hour,} \colorbox{blue!10}{$\mathbf{\otimes}$} \colorbox{blue!10}{$\mathbf{\otimes}$} \colorbox{blue!10}{$\mathbf{\otimes}$} \colorbox{blue!10}{$\mathbf{\otimes}$} \colorbox{blue!10}{$\mathbf{\otimes}$} \textbf{...} \tikz[baseline=-0.5ex]{\draw[red, dashed, ->, thick] (0,0) -- (0.7,0);} \textcolor{red}{\textit{Step \#7}}
\begin{center}
\vspace{-0.1cm}
\quad\quad\quad\quad\quad\quad\quad\quad\quad\quad\quad\quad\quad
\tikz{\draw[->, thick, line width=1.pt] (0,0) -- (0,-0.5) node[right, pos=0.5] {\textit{diffusion, do sampling procedure in Eq.(\ref{eq:sampling procedure equation 1})}};}
\end{center}
\colorbox{gray!20}{\parbox{\dimexpr\textwidth-2\fboxsep}{\texttt{Top-1 confident index of single-step predictive distributions: \textcolor{red}{7th mask token}}}}
\begin{center}
\vspace{-0.1cm}
\tikz{\draw[->, thick, line width=1.pt] (0,0) -- (0,-0.5) node[right, pos=0.5] {\textit{decoding}};}
\end{center}
\vspace{-0.1cm}
\texttt{Lorie earns \$10 per hour, so} \colorbox{blue!10}{$\mathbf{\otimes}$} \colorbox{blue!10}{$\mathbf{\otimes}$} \colorbox{blue!10}{$\mathbf{\otimes}$} \colorbox{blue!10}{$\mathbf{\otimes}$} \textbf{...} \tikz[baseline=-0.5ex]{\draw[red, dashed, ->, thick] (0,0) -- (1,0);} \textcolor{red}{\textit{Step \#8}}
\end{tcolorbox}
\\[0.1em]
\begin{tcolorbox}[title={}, colback=white,colframe=blue!15, coltitle=black, width=\textwidth]
\noindent \textbf{Ours:}\; \texttt{Lorie earns \$10 per hour, so} \textcolor{red}{\underline{Karen}} \texttt{ earns twice that, which is \$10 x 2 = \$20 per hour.} \tikz[baseline=-0.5ex]{\draw[red, dashed, ->, thick] (0,0) -- (0.8,0);} \textcolor{red}{\textit{The generative trajectory starts to be different from the results of baseline at the "Karen" }}\\
\texttt{
If Karen works 3 hours per day, she earns \$20 x 3 = \$60 per day.} \\
\texttt{In two days, Karen earns \$60 x 2 = \$120.}\\
\texttt{The answer is:} \colorbox{green!30}{\texttt{120}}\\
\noindent\hdashrule{\textwidth}{0.5pt}{2pt}\\
\vspace{0.3em}
\noindent\colorbox{gray!20}{\parbox{\dimexpr\textwidth-2\fboxsep}{\centering\vspace{0.2em}\textbf{Illustration of diffusion intermediate process when the generative trajectory starts to be separated}\vspace{0.2em}}}
\vspace{-0.8em}\\
\noindent\textcolor{gray}{\scriptsize token index \texttt{\hspace{1.3em}1\hspace{3.5em}2\hspace{3em}3\hspace{2.1em}4\hspace{3.em}5\hspace{1.1em}6\hspace{1.2em}7\hspace{1.4em}8\hspace{1.6em}9\hspace{1.4em}10\hspace{1.4em}11}}\\
\noindent \textbf{Ours:}\; \texttt{Lorie earns \$10 per hour,} \colorbox{blue!10}{$\mathbf{\otimes}$} \colorbox{blue!10}{$\mathbf{\otimes}$} \colorbox{blue!10}{$\mathbf{\otimes}$} \colorbox{blue!10}{$\mathbf{\otimes}$} \colorbox{blue!10}{$\mathbf{\otimes}$} \textbf{...} \tikz[baseline=-0.5ex]{\draw[red, dashed, ->, thick] (0,0) -- (1,0);} \textcolor{red}{\textit{Step \#7}}
\begin{center}
\vspace{-0.1cm}
\quad\quad\quad\quad\quad\quad\quad\quad\quad\quad\quad\quad\quad\quad
\tikz{\draw[->, thick, line width=1.pt] (0,0) -- (0,-0.5) node[right, pos=0.5] {\textit{diffusion, do sampling procedure in Eq.(\ref{pratical decoding})}};}
\colorbox{gray!20}{\parbox{\dimexpr\textwidth-2\fboxsep}{\texttt{Top-1 confident index of approximated target distributions: \textcolor{red}{8th mask token}}}}

\end{center}
\begin{center}
\vspace{-0.2cm}
\tikz{\draw[->, thick, line width=1.pt] (0,0) -- (0,-0.5) node[right, pos=0.5] {\textit{decoding}};}
\end{center}
\vspace{-0.2cm}
\texttt{Lorie earns \$10 per hour,} 
\colorbox{blue!10}{$\mathbf{\otimes}$} 
\texttt{Karen}
\colorbox{blue!10}{$\mathbf{\otimes}$} \colorbox{blue!10}{$\mathbf{\otimes}$} \colorbox{blue!10}{$\mathbf{\otimes}$} \textbf{...} \tikz[baseline=-0.5ex]{\draw[red, dashed, ->, thick] (0,0) -- (0.5,0);} \textcolor{red}{\textit{Step \#8, reject single-step top-1 confident index}}
\end{tcolorbox}
\end{tabular}
\caption{Final outputs of different sampling procedures on an example of the GSK8K benchmark. We present the reason analysis of the different final outputs by tracking the trajectory separated points. }
\label{gsm8k}
\vspace{-0.5cm}
\end{table}

\subsection{Effectiveness Analysis By Generative Examples}

\textbf{Our proposed method can reject suboptimal single-step predictive distribution and determine a true trajectory.} As illustrated in Table \ref{gsm8k}, at the critical Step \#7, the baseline's single-step confidence metric selects position 7 for the token "so", a syntactic connector with high local confidence but low semantic value. This choice initiates a convoluted reasoning path where the model first calculates Lorie's 3-hour earnings (\$30), then misapplies this intermediate result to Karen's calculation, ultimately confusing daily earnings (\$60) with total hours worked (6) to produce the incorrect answer of \$360. In contrast, the proposed approximated target distribution identifies position 8 ("Karen") as optimal, rejecting the baseline's locally confident choice. By decoding "Karen" directly after "Lorie earns \$10 per hour,", our method establishes a cleaner logical flow that immediately transitions to Karen's hourly rate (\$20), leading to the correct calculation.

This example illustrates a fundamental advantage of our sampling strategy: the ability to distinguish between syntactic fluency and semantic importance. While single-step predictive distributions often favor high-confidence function words or connectors that ensure grammatical correctness, these choices may lead to suboptimal reasoning trajectories. Our method's use of the most recent iteration consistency as a filtering mechanism naturally prioritizes semantically pivotal tokens—those that determine the logical direction of problem-solving—over merely fluent transitions. This results in more direct reasoning paths that avoid unnecessary intermediate calculations and maintain clearer alignment between the problem statement and solution strategy, ultimately reducing cascading errors in multi-step reasoning tasks.

\textbf{More examples and advantages.} Additionally, we observe that our proposed method can not only automatically defer the decoding of semantically ambiguous tokens but also encourage more reasonable generative trajectories. Please refer to these examples and corresponding analyses in Tables  \ref{gsm8k2} and \ref{tab:trip example}.

%Key parameters for cache and self-speculative decoding include block 
%length, draft length, and cache refresh interval. Unless stated otherwise, we use a block length of 8, a cache refresh interval of 8 (meaning key-value states are updated every 8 decoded tokens), and a draft length $\in {3, 4, 5}$. The maximum generation length for all models is set to 256 tokens.

\section{Related Work}
\label{sec:Related Work}

\subsection{Diffusion Language Models}
The application of diffusion processes to discrete domains, specifically for text generation, originates from the seminal studies by \citep{sohl2015deep} and \citep{hoogeboom2021argmax}. This paradigm was significantly extended by the Structured Denoising Diffusion Probabilistic Models (D3PM) framework \citep{austin2021structured}, which employs a discrete state Markov chain to systematically inject noise into the input sequence during the forward phase. While this initial formulation relied on discrete steps, subsequent research investigated the methodology within continuous-time settings \citep{campbell2022continuous}. Concurrently, SEDD \citep{lou2023discrete} introduced an alternative strategy by directly calculating likelihood ratios and incorporating a novel denoising score entropy objective. Recent theoretical and empirical investigations—exemplified by MDLM \citep{shi2024simplified,sahoo2024simple,zheng2024masked} and RADD \citep{ou2024your}—have demonstrated a crucial equivalence: various distinct parameterizations of discrete diffusion models are mathematically identical. This finding has been central to unifying and simplifying the understanding of these architectures.

Inspired by these algorithmic and theoretical advancements, the scale of diffusion models has expanded significantly, recently reaching the 7--8 billion parameter regime. For instance, LLaDA \citep{nie2025large} was trained from scratch using a weighted cross-entropy loss, whereas Dream \citep{ye2025dream} was adapted from the established Qwen2.5 base model, achieving competitive performance with substantially reduced data requirements. Prominent commercial implementations, such as Mercury \citep{labs2025mercuryultrafastlanguagemodels}, Gemini Diffusion \citep{gemini-diffusion}, and Seed Diffusion \citep{song2025seeddiffusionlargescalediffusion}, further illustrate the maturity of this field. A notable characteristic of these models is that they achieve external benchmark performance comparable to larger autoregressive (AR) language models while exhibiting superior decoding efficiency. Collectively, these results affirm the potential of diffusion language models as a compelling alternative within the generative AI landscape.

\subsection{Inference Optimization for Diffusion Language Models}
Alongside architectural advancements, researchers are actively optimizing the inference phase to achieve a better balance of latency and performance. This work largely centers on two approaches: implementing efficient caching protocols and developing advanced decoding algorithms. To mitigate the substantial computational cost imposed by the bidirectional attention mechanism, which represents a significant bottleneck in DLMs inference, researchers have proposed several caching techniques~\citep{wu2025fastdllmtrainingfreeaccelerationdiffusion, ma2025dkv, liu2025dllm, song2025sparse}, which focus on how and when the Key-Value (KV) activations are stored or retrieved. Some other technique modifies the attention mechanism itself to reduce computational load ~\citep{chen2025dpad}.

Another direction focuses on optimizing decoding strategies to enhance convergence and efficiency. One primary direction involves dynamically adjusting the unmasking process based on metric value assigned to each token prediction, such as confidence. For example, Dimple \citep{yu2025dimple} employs confident decoding, which adjusts the generation budget (e.g., the number of tokens unmasked) at each step according to a confidence threshold. Similarly, SlowFast Sampling \citep{wei2025acceleratingdiffusionlargelanguage} proposes a dynamic strategy that adaptively alternates between exploratory and acceleration steps. Prophet \citep{li2025diffusion} further refines this by dynamically deciding whether to enable early-commit decoding based on the confidence gap between the top-k predicted tokens. WINO \citep{hong2025wide} introduces a parallel draft-and-verify scheme that accelerates decoding by aggressively drafting tokens and concurrently refining those with low confidence. To improve generation quality, \citep{yang2025tamingmaskeddiffusionlanguage} introduced end-of-sequence (EOS) token rejection during early decoding stages combined with an ascending decoding budget. Furthermore, \citep{li2025diffuspecunlockingdiffusionlanguage} incorporated speculative decoding into Diffusion Language Models (DLMs) by utilizing a pre-trained DLM to produce multi-token drafts in a single forward pass.

A distinct category of optimization leverages information from historical decoding steps to improve consistency. \citep{wang2025time} introduced temporal self-consistency voting, which selects the most reliable output by identifying the most consistent sequence across all decoding steps. Additionally, dInfer \citep{ma2025dinferefficientinferenceframework} proposes credit decoding, where the predictive distribution is updated by fusing logit scores with a credit score that explicitly quantifies token consistency. 

Instead of directly altering the sampling distribution through logit fusion, our method adopts a novel perspective by modeling the consistency of historical steps using the conditional mutual information between the context and token predictions. This offers a more theoretically grounded and fine-grained measure of consistency. By explicitly minimizing this term, our approach naturally promotes coherent and error-resistant sequences without relying on heuristic score adjustments, resulting in superior robustness for complex generation tasks.
\section{Conclusion}
\label{sec:conclusion}
In this work, we presented Coherent Contextual Decoding (CCD), a framework that fundamentally reshapes Diffusion Language Model sampling by leveraging historical contexts. Our approach utilizes a trajectory rectification mechanism—grounded in conditional mutual information—to reject sub-optimal paths without compromising coherence. Notably, our adaptive strategy, CCD-DS, breaks the conventional trade-off between speed and accuracy. By replacing uniform decoding budgets with adaptive sampling based on consistency, it realizes a synchronous enhancement of efficiency and quality, delivering up to $3.48\times$ inference speedup and $3.91\%$ performance improvement across diverse benchmarks on both Dream and LLaDA.

%Bibliography
\bibliographystyle{unsrt}  
\bibliography{references}  

\clearpage

\begin{appendices}

\section{Appendix}
\label{sec:Appendix}

\begin{table}[!h]
  \renewcommand{\arraystretch}{0.5}
\centering
\begin{tabular}{c}
\begin{tcolorbox}[title={\textbf{Task:} \emph{Lorie earns \$10 per hour. Karen earns twice what Lorie earns. How much does Karen earn in two days if she works 3 hours per day?
\quad \colorbox{red!30}{\texttt{Incorrect}}\quad \colorbox{green!30}{\texttt{Correct}}}\quad \colorbox{blue!10}{$\mathbf{\otimes}$} Mask token}, colback=white,colframe=black!10, coltitle=black, width=\textwidth]
\noindent \textbf{Baseline:}\; \texttt{Kylar wants to buy 16 glasses, so he needs to pay for 16 glasses in total.}\\
\texttt{The first glass costs \$5}\textcolor{red}{\underline{.}}\tikz[baseline=-0.5ex]{\draw[red, dashed, ->, thick] (0,0) -- (0.8,0);} \textcolor{red}{\textit{The generative trajectory starts to be different from our results at the "." }}\\
\texttt{The second glass costs 60\% of the price, which is \$5 *0.6 = \$3.}\\
\texttt{The third glass costs \$5.}\\
\texttt{The fourth glass costs 60\% of the price,whichis \$5*0.6 =\$3The fifth glass costs \$5.The sixth glass costs 60\%of the price,which is \$5 *0.6 = \$3.}\\
\texttt{The seventh glass costs \$5.}\\
\texttt{The eighth glass costs60\% of the price,which is \$5* 0.6 = \$3.}\\
\texttt{The ninth glass costs \$5.}\\
\texttt{The tenth glass costs60\% of the price,which is \$5 *0.6 = \$3.}\\
\texttt{The eleventh glass costs \$5.}\\
\texttt{The twelfth glass costs 60\% of the price, which is \$5 * 0.6 = \$3.The thirteenth glass costs \$5.The fourteenth glass costs 60\% of the price, which is \$5 * 0.6 = \$3.The fifteenth glass costs \$5.The sixteenth glass costs 60\% of the price, which is \$5 * 0.6 = \$3.Kylar needs to pay\$5*8+\$3*8=\$40+\$24=\$64 for the glasses.}\\
\texttt{The answer is:\colorbox{green!30}{64}}\\
\noindent\hdashrule{\textwidth}{0.5pt}{2pt}\\
\vspace{0.3em}
\noindent\colorbox{gray!20}{\parbox{\dimexpr\textwidth-2\fboxsep}{\centering\vspace{0.2em}\textbf{Illustration of diffusion intermediate process  when the generative trajectroty starts to be separated}\vspace{0.2em}}}
\vspace{-0.8em}\\
\noindent\textcolor{gray}{\scriptsize token index \texttt{\hspace{5.2em}20\hspace{2.5em}21\hspace{3.3em}22\hspace{2.9em}23\hspace{3.em}24\hspace{0.8em}25\hspace{1.em}26\hspace{1.em}27\hspace{1.2em}28\hspace{1.em}29\hspace{1.4em}}}\\
\noindent \textbf{Baseline:}\; \texttt{... The first glass costs \$5} \colorbox{blue!10}{$\mathbf{\otimes}$} \colorbox{blue!10}{$\mathbf{\otimes}$} \colorbox{blue!10}{$\mathbf{\otimes}$} \colorbox{blue!10}{$\mathbf{\otimes}$} \colorbox{blue!10}{$\mathbf{\otimes}$} \textbf{...} \tikz[baseline=-0.5ex]{\draw[red, dashed, ->, thick] (0,0) -- (0.7,0);} \textcolor{red}{\textit{Step \#25}}
\begin{center}
\vspace{-0.1cm}
\quad\quad\quad\quad\quad\quad\quad\quad\quad\quad\quad\quad\quad
\tikz{\draw[->, thick, line width=1.pt] (0,0) -- (0,-0.5) node[right, pos=0.5] {\textit{diffusion, do sampling procedure in Eq.(\ref{eq:sampling procedure equation 1})}};}
\end{center}
\colorbox{gray!20}{\parbox{\dimexpr\textwidth-2\fboxsep}{\texttt{Top-1 confident index of single-step predictive distributions: \textcolor{red}{25th mask token}}}}
\begin{center}
\vspace{-0.1cm}
\tikz{\draw[->, thick, line width=1.pt] (0,0) -- (0,-0.5) node[right, pos=0.5] {\textit{decoding}};}
\end{center}
\vspace{-0.1cm}
\texttt{... The first glass costs \$5.} \colorbox{blue!10}{$\mathbf{\otimes}$} \colorbox{blue!10}{$\mathbf{\otimes}$} \colorbox{blue!10}{$\mathbf{\otimes}$}
\colorbox{blue!10}{$\mathbf{\otimes}$} 
\textbf{...} \tikz[baseline=-0.5ex]{\draw[red, dashed, ->, thick] (0,0) -- (1,0);} \textcolor{red}{\textit{Step \#25}}
\end{tcolorbox}
\\[0.1em]
\begin{tcolorbox}[title={}, colback=white,colframe=blue!15, coltitle=black, width=\textwidth]
\noindent \textbf{Ours:}\;  \texttt{Kylar wants to buy 16 glasses, so he needs to pay for 16 glasses in total.}\\
\texttt{The first glass costs \$5, and every second glass costs 60\% of the price, which is \$5 * 60/100 = \$3.
So, every second glass costs \$3.}\\
\texttt{Since Kylar wants to buy 16 glasses, he will have to pay for 8 glasses at the regular price of \$5 and 8 glasses at the discounted price of \$3.}\\
\texttt{The cost of 8 glasses at the regular price is 8 * \$5 = \$40.}\\
\texttt{The cost of 8 glasses at the discounted price is 8 * \$3 = \$24.}\\
\texttt{Therefore, Kylar needs to pay \$40 + \$24 = \$64 for the 16 glasses.}\\
\texttt{The answer is:\colorbox{green!30}{64}}\\
\noindent\hdashrule{\textwidth}{0.5pt}{2pt}\\
\vspace{0.3em}
\noindent\colorbox{gray!20}{\parbox{\dimexpr\textwidth-2\fboxsep}{\centering\vspace{0.2em}\textbf{Illustration of diffusion intermediate process when the generative trajectroty starts to be separated}\vspace{0.2em}}}
\vspace{-0.8em}\\
\noindent\textcolor{gray}{\scriptsize token index \texttt{\hspace{5.2em}20\hspace{2.5em}21\hspace{3.3em}22\hspace{2.9em}23\hspace{3.em}24\hspace{0.8em}25\hspace{1.em}26\hspace{1.em}27\hspace{1.2em}28\hspace{1.em}29\hspace{1.4em}}}\\
\noindent \textbf{Baseline:}\; \texttt{... The first glass costs \$5} \colorbox{blue!10}{$\mathbf{\otimes}$} \colorbox{blue!10}{$\mathbf{\otimes}$} \colorbox{blue!10}{$\mathbf{\otimes}$} \colorbox{blue!10}{$\mathbf{\otimes}$} \colorbox{blue!10}{$\mathbf{\otimes}$} \textbf{...} \tikz[baseline=-0.5ex]{\draw[red, dashed, ->, thick] (0,0) -- (0.7,0);} \textcolor{red}{\textit{Step \#25}}
\begin{center}
\vspace{-0.1cm}
\quad\quad\quad\quad\quad\quad\quad\quad\quad\quad\quad\quad\quad\quad
\tikz{\draw[->, thick, line width=1.pt] (0,0) -- (0,-0.5) node[right, pos=0.5] {\textit{diffusion, do sampling procedure in Eq.(\ref{pratical decoding})}};}
\colorbox{gray!20}{\parbox{\dimexpr\textwidth-2\fboxsep}{\texttt{Top-1 confident index of approximated target distributions: \textcolor{red}{28th mask token}}}}

\end{center}
\begin{center}
\vspace{-0.2cm}
\tikz{\draw[->, thick, line width=1.pt] (0,0) -- (0,-0.5) node[right, pos=0.5] {\textit{decoding}};}
\end{center}
\vspace{-0.2cm}
\texttt{... The first glass costs \$5} 
\colorbox{blue!10}{$\mathbf{\otimes}$} 
\colorbox{blue!10}{$\mathbf{\otimes}$} \colorbox{blue!10}{$\mathbf{\otimes}$}
second
\colorbox{blue!10}{$\mathbf{\otimes}$} \textbf{...} \tikz[baseline=-0.5ex]{\draw[red, dashed, ->, thick] (0,0) -- (0.3,0);} \textcolor{red}{\textit{Reject single-step top-1 confident index}}
\end{tcolorbox}
\end{tabular}
\caption{Final outputs of different sampling procedures on an example of the GSK8K benchmark. We present the reason analysis of the different final outputs by tracking the trajectory of the separated points.  }
\label{gsm8k2}
\end{table}

\begin{table}[!h]
  \renewcommand{\arraystretch}{0.8}
\centering
\begin{tabular}{c}
\begin{tcolorbox}[title={\textbf{Task:} \emph{You plan to visit 4 European cities for 22 days in total. You only take direct flights to commute between cities. You would like to visit Warsaw for 6 days. You would like to meet your friends at Warsaw between day 1 and day 6 to tour together. You would like to visit Vienna for 6 days. You want to spend 6 days in Mykonos. You want to spend 7 days in Vilnius. Here are the cities that have direct flights: Warsaw and Vilnius, Warsaw and Vienna, Vilnius and Vienna, Vienna and Mykonos. Find a trip plan of visiting the cities for 22 days by taking direct flights to commute between them. SOLUTION:\quad \colorbox{red!30}{\texttt{Incorrect}}\quad \colorbox{green!30}{\texttt{Correct}}}\quad \colorbox{blue!10}{$\mathbf{\otimes}$} Mask token}, colback=white,colframe=black!10, coltitle=black, width=\textwidth]
\noindent \textbf{Baseline:}\; \texttt{Here is the trip plan for visiting the 4 European cities for 22 days:}\\
\texttt{**Day 1-6:** Arriving in Warsaw and visit Warsaw for 6 days.}\\
\texttt{**Day 6:** Fly from Warsaw to Vilnius.}\\
\texttt{**Day 6-12:** Visit Vilnius for 7 days.}\\
\texttt{**Day 12:** Fly from Vilnius to Vienna.}\\
\texttt{**Day 12-1}\colorbox{red!30}{\texttt{8}}\texttt{:** Visit Vienna for 6 days.} \tikz[baseline=-0.5ex]{\draw[red, dashed, ->, thick] (0,0) -- (0.9,0);} \textcolor{red}{\textit{It should be 7 instead of 8 to match the 6 days}}\\
\texttt{**Day 1}\colorbox{red!30}{\texttt{8}}\texttt{:** Fly from Vienna to Mykonos.}\tikz[baseline=-0.5ex]{\draw[red, dashed, ->, thick] (0,0) -- (2,0);} \textcolor{red}{\textit{Accumulated error from the last line}}\\
\texttt{**Day 1}\colorbox{red!30}{\texttt{8}}\texttt{-22:** Visit Mykonos for 6 days.}\tikz[baseline=-0.5ex]{\draw[red, dashed, ->, thick] (0,0) -- (2,0);} \textcolor{red}{\textit{Accumulated error from the last line}}\\
\noindent\hdashrule{\textwidth}{0.5pt}{2pt}\\
\vspace{0.3em}
\noindent\colorbox{gray!20}{\parbox{\dimexpr\textwidth-2\fboxsep}{\centering\vspace{0.2em}\textbf{Illustration of diffusion intermediate process when three mask tokens left}\vspace{0.2em}}}
\vspace{-0.8em}\\
\noindent \textbf{Baseline:}\; \texttt{Here is the trip plan for visiting the 4 European cities for 22 days:}\\
\texttt{**Day 1-6:** Arriving in Warsaw and visit Warsaw for 6 days.}\\
\texttt{**Day 6:** Fly from Warsaw to Vilnius.}\\
\texttt{**Day 6-12:** Visit Vilnius for 7 days.}\\
\texttt{**Day 12:** Fly from Vilnius to Vienna.}\\
\texttt{**Day 12-1}\colorbox{red!30}{\texttt{8}}\texttt{:** Visit Vienna for 6 days.} \tikz[baseline=-0.5ex]{\draw[red, dashed, ->, thick] (0,0) -- (0.3,0);} \textcolor{red}{\textit{Semantically ambiguous token was decoded early.}}\\
\texttt{**Day 1}\colorbox{red!30}{\texttt{8}}\texttt{:** Fly from Vienna to Mykonos.}\\
\texttt{**Day 1}\colorbox{red!30}{\texttt{8}}\texttt{-22:** Visit Mykonos for} \colorbox{blue!10}{$\mathbf{\otimes}$} \colorbox{blue!10}{$\mathbf{\otimes}$} \colorbox{blue!10}{$\mathbf{\otimes}$}\tikz[baseline=-0.5ex]{\draw[red, dashed, ->, thick] (0,0) -- (0.9,0);} \textcolor{red}{\textit{Left last three mask tokens at the end of sentence}}
\end{tcolorbox}\\
[0.1em]
\begin{tcolorbox}[title={}, colback=white,colframe=blue!15, coltitle=black, width=\textwidth]

\noindent \textbf{Ours:}\; \texttt{Here is the trip plan for visiting the 4 European cities for 22 days:}\\
\texttt{**Day 1-6:** Arriving in Warsaw and visit Warsaw for 6 days.}\\
\texttt{**Day 6:** Fly from Warsaw to Vilnius.}\\
\texttt{**Day 6-12:** Visit Vilnius for 7 days.}\\
\texttt{**Day 12:** Fly from Vilnius to Vienna.}\\
\texttt{**Day 12-1}\colorbox{green!30}{\texttt{7}}\texttt{:** Visit Vienna for 6 days.}\tikz[baseline=-0.5ex]{\draw[red, dashed, ->, thick] (0,0) -- (2,0);} \textcolor{red}{\textit{Correct}}\\
\texttt{**Day 1}\colorbox{green!30}{\texttt{7}}\texttt{:** Fly from Vienna to Mykonos.}\tikz[baseline=-0.5ex]{\draw[red, dashed, ->, thick] (0,0) -- (2,0);} \textcolor{red}{\textit{Correct}}\\
\texttt{**Day 1}\colorbox{green!30}{\texttt{7}}\texttt{-22:** Visit Mykonos for 6 days.}\tikz[baseline=-0.5ex]{\draw[red, dashed, ->, thick] (0,0) -- (2,0);} \textcolor{red}{\textit{Correct}}\\
\noindent\hdashrule{\textwidth}{0.5pt}{2pt}\\
\vspace{-0.8em}
\noindent\colorbox{gray!20}{\parbox{\dimexpr\textwidth-2\fboxsep}{\centering\vspace{0.2em}\textbf{Illustration of diffusion intermediate process when three mask tokens left}\vspace{0.2em}}}
\vspace{0.3em}\\
\noindent \textbf{Ours:}\; \texttt{Here is the trip plan for visiting the 4 European cities for 22 days:}\\
\texttt{**Day 1-6:** Arriving in Warsaw and visit Warsaw for 6 days.}\\
\texttt{**Day 6:** Fly from Warsaw to Vilnius.}\\
\texttt{**Day 6-12:** Visit Vilnius for 7 days.}\\
\texttt{**Day 12:** Fly from Vilnius to Vienna.}\\
\texttt{**Day 12-1}  \colorbox{blue!10}{$\mathbf{\otimes}$} \texttt{:** Visit Vienna for 6 days.}\tikz[baseline=-0.5ex]{\draw[red, dashed, ->, thick] (0,0) -- (0.4,0);} \textcolor{red}{\textit{Semantically ambiguous token maintains masked.}}\\
\texttt{**Day 1} \colorbox{blue!10}{$\mathbf{\otimes}$} \texttt{:** Fly from Vienna to Mykonos.}\tikz[baseline=-0.5ex]{\draw[red, dashed, ->, thick] (0,0) -- (0.4,0);} \textcolor{red}{\textit{Semantically ambiguous token maintains masked.}}\\
\texttt{**Day 1} \colorbox{blue!10}{$\mathbf{\otimes}$} \texttt{-22:** Visit Mykonos for 6 days.}\tikz[baseline=-0.5ex]{\draw[red, dashed, ->, thick] (0,0) -- (0.4,0);} \textcolor{red}{\textit{Semantically ambiguous token maintains masked.}}
\end{tcolorbox}
\end{tabular}
\caption{Upper: final outputs of different sampling procedures on an example of the Trip benchmark. Bottom: The reason analysis of the different final outputs by tracking the late-stage generative results (three mask tokens are left). }
\label{tab:trip example}
\vspace{-0.5cm}
\end{table}

\textbf{-- Our proposed method can encourage more reasonable generative trajectories.} As demonstrated in Table~\ref{gsm8k2}, our method leverages global context to make superior decoding decisions at critical divergence points. At Step \#25 after "\texttt{The first glass costs \$5}", the baseline selects the period "\texttt{.}" (25th mask token) based on single-step predictive confidence.  In contrast, our approximated target distributions identify the 28th mask token "\texttt{second}" as optimal, enabling "\texttt{, and every second glass costs...}". This choice leads to a superior reasoning strategy: establishing the general pricing pattern first, then performing a consolidated calculation ("\texttt{8 glasses at \$5 and 8 glasses at \$3}"). 

This exemplifies a key mechanistic advantage: while single-step confidence gravitates toward syntactically safe choices (sentence-ending punctuation), our cross-iteration consistency recognizes that continuing with "\texttt{second}" enables more elegant problem decomposition. Trajectory-level optimization fundamentally improves solution quality in reasoning tasks where strategic planning outweighs local fluency.

\textbf{-- Our proposed method can automatically defer the decoding of semantically ambiguous tokens.}
As shown in Table \ref{tab:trip example} , when three mask tokens remain, the baseline has already incorrectly decoded "8" (should be "7") at position "Day 12-1\_". This premature decoding occurs because the baseline's uniform sampling budget forces token selection based on single-step predictive distribution alone. In contrast, our method identifies this token as having high context sensitivity and shows inconsistent predictions across iterations in the historical buffer. Therefore, our proposed method automatically defers its decoding by maintaining it as masked.

This deferral is crucial for arithmetic-dependent reasoning tasks. By keeping these ambiguous tokens masked longer, our method can leverage the constraint that total days must sum to 22, which becomes clearer as surrounding context solidifies. The baseline's early mistake cascades through subsequent tokens ("Day 18" instead of "Day 17"), while our sampling procedure naturally preserves tokens with low cross-iteration agreement for later stages when sufficient context is available, preventing error accumulation in structured reasoning where tokens have strong interdependencies.  

\end{appendices}

\end{document}